\documentclass{svjour3}
\usepackage{lmodern}
\journalname{Machine Learning}
\usepackage{natbib}
\usepackage[utf8]{inputenc}
\usepackage{amsmath,amsfonts,amssymb,bm}
\usepackage{graphicx,booktabs,tabularx,array,multirow,multicol}
\usepackage{algorithm,algorithmic}
\usepackage{enumerate,comment}
\usepackage{url,xspace,listings,xcolor}
\usepackage[T1]{fontenc}

\usepackage{appendix}
\usepackage{caption, subcaption}
\usepackage{hyperref}
\usepackage{tikz}
\usetikzlibrary{calc}

\DeclareRobustCommand{\blue}[1]{#1} 
\newcommand{\strike}[1]{}

\floatname{algorithm}{Procedure}
\frenchspacing

\newtheorem{mydefinition}{Definition}
\newtheorem{myexample}{Example}
\newtheorem{myremark}{Remark}
\newtheorem{mylemma}{Lemma}

\newtheorem{myproposition}[mylemma]{Proposition}
\newtheorem{myprop}{Proposition}

\newcommand{\gray}[1]{\textcolor{gray}{#1}}
\newcommand{\SH}{\texttt{SearchHyp}\xspace}

\newcommand{\GM}{\texttt{GenMol}\xspace}
\newcommand{\Gen}{$\lambda$\texttt{Gen}\xspace}

\bibliographystyle{spbasic}
\hypersetup{
    colorlinks=true,
    linkcolor=blue,
    citecolor=blue,
    filecolor=magenta,
    urlcolor=blue
}

\begin{document}
\title{Symbolic Neural Generation with Applications to Lead Discovery in Drug Design}

\author{Ashwin Srinivasan \and 
        Tirtharaj Dash \and 
        A Baskar \and 
        Michael Bain \and 
        Sanjay Kumar Dey \and 
        Mainak Banerjee
}

\authorrunning{Srinivasan et al.} 

\institute{Ashwin Srinivasan \at 
            Dept. of Computer Science \& Information Systems and APPCAIR \\
            BITS Pilani, K K Birla Goa Campus, India\\
            \email{ashwin@goa.bits-pilani.ac.in} \\
            \emph{Corresponding Author}
            \and
            Tirtharaj Dash \at 
            Dept. of Computer Science \& Information Systems \\
            BITS Pilani, K K Birla Goa Campus, India \\
            \email{tirtharaj@goa.bits-pilani.ac.in} \\[1ex]
            Department of Biochemistry, \\
            University of Cambridge, Cambridge, UK \\[1ex]
            \emph{Note:} Part of this work was carried out while the author was at the University of Cambridge, UK. The major revision of this work was completed at BITS Pilani.
            \and
            A. Baskar \at 
            Dept. of Computer Science \& Information Systems\\
            BITS Pilani, K K Birla Goa Campus, India\\
            \email{abaskar@goa.bits-pilani.ac.in} \and
            Michael Bain \at
            School of Computer Science and Engineering \\
            University of New South Wales, Sydney\\
            \email{m.bain@unsw.edu.au}
            \and
            Sanjay Kumar Dey \at 
            Dr. B.R. Ambedkar Center for Biomedical Research\\
            University of Delhi, New Delhi, India\\
            \email{skdey@acbr.du.ac.in}
            \and
            Mainak Banerjee \at 
            Department of Chemistry\\
            BITS Pilani, K.K. Birla Goa Campus, India\\
            \email{mainak@goa.bits-pilani.ac.in}
}

\maketitle

\begin{abstract}
We investigate a relatively under-explored class of hybrid neurosymbolic models that integrate
symbolic learning with neural reasoning to construct data generators meeting formal correctness 
criteria. In \textit{Symbolic Neural Generators} (SNGs), symbolic learners
examine logical specifications of feasible data from a small set of
instances—sometimes just one. Each specification in turn constrains the conditional
information supplied to a neural-based generator, which rejects any instance
violating the symbolic specification.
Like other neurosymbolic approaches, SNG exploits the complementary strengths of symbolic and 
neural methods. The outcome of an SNG is a pair $(H, X)$, where $H$ is a
 symbolic description
of feasible instances constructed from data, and $X$ a set of generated new instances
that satisfy the description.  We introduce a semantics 
for such systems, based on the construction of appropriate \textit{base} and \textit{fibre}
partially-ordered sets combined into an overall partial order.
We implement an SNG combining a restricted form of Inductive Logic Programming (ILP) with a 
large language model (LLM) and evaluate it on early-stage drug design. Our main interest is
the description and the set of potential inhibitor molecules generated by the SNG.
On benchmark problems -- where drug targets are well understood -- SNG performance is
statistically comparable to state-of-the-art methods. On exploratory problems with poorly 
understood targets, generated molecules exhibit binding affinities on par with leading clinical 
candidates. Experts further find the symbolic specifications useful as preliminary filters, 
with several generated molecules identified as viable for synthesis and wet-lab testing.
\end{abstract}

\keywords{
Neurosymbolic Modelling, Symbolic Neural Generation, Inductive Logic Programming, LLMs, Drug Discovery
}

\section{Introduction}
\label{sec:intro}

Consider the following real problem\footnote{Biochemical terms used
throughout this paper are briefly explained in
Appendix~\ref{app:biochem}.}:

\begin{quote}
We are interested in generating a set of small molecules that
can bind to a known protein. We know something about the protein: its amino-acid sequence, its role
in causing some disease, an approximate section of the sequence where the small molecule should bind,
and so on. We also know 5 small molecules (inhibitors) that are known to bind to this protein, of which
1 is known to have toxic side-effects.
We also know, from our previous chemical knowledge, that to be easily synthesised
we would like molecules that contain a particular scaffold,
whose molecular weights are not too high or too low, predicted toxicity values are as low
as possible, and predicted binding affinity to the protein is as high as possible. Additional
constraints on the molecule are not known, since the 3 dimensional structure of the target site
has only recently become available.
Can we generate 10 additional possible inhibitors?
\end{quote}

\noindent
Conceptually, this can be seen as a special case of the general problem of identifying
elements of a set:

\[
    {\cal X} = \{x: x \in {\cal U}, \Phi(x) \mbox{ is true}\}
\]

\noindent
where ${\cal U}$ is a set consisting of all possible instances of relevance -- small molecules,
for instance -- and $\Phi(x)$ is a predicate that is true of the specific instances of interest.
Given the remarkable abilities of large pre-trained models,
if $\Phi(\cdot)$ is known, it is conceivable that we may indeed be able to generate
molecules directly. Neural-generators are stochastic, and some of the generated molecules
may not actually satisfy $\Phi(\cdot)$, but we can resort to some kind of rejection-sampling
until we meet our requirement. The obvious difficulty is, of course, that for many
problems, ${\cal U}$ can be very large (for example, there may be up to ${10}^{60}$ small molecules),
and rejection-sampling can become quite inefficient.
The real issue though is that for most complex real problems, we do not know $\Phi$.
Some  options we could try are:
    \begin{itemize}
        \item We could consider fine-tuning an existing generative model with 
            the data instances we already know. This runs into the difficulty that the data are often
            observations of phenomena that are  rare, and therefore we usually have very few
            instances -- in the order of 10s, rather than the several 100s or 1000s needed for
            effective fine-tuning.
            In the event, we will probably be left with a generator that is largely unaffected
            by the few data instances we have;
        \item We could consider prompt-engineering for an LLM. That is, assuming
            there exists some description $p$ of the set ${\cal X}$ for
            some LLM $\lambda$, we attempt to find $p$ by trial-and-error. There
            are two issues here: it is not clear that the text-based description
            $p$ is any easier to identify than the formal description $\Phi$; and
            it is not clear that it would be any more precise;
        \item We could consider exploiting the few-shot learning ability of an LLM
            \citep{Brow:etal:x:2020} by providing the data instances we know as part of the
            context information for the LLM.
            But, how do we then know whether the generated instances are indeed from
            the set of interest? In-context learning usually works best within an iterative loop that
            updates the context with the result of some validation mechanism providing
            feedback. What would this be here? For specialised problems, human feedback
            may be both difficult to obtain and unlikely to be
            as helpful as with routine conversations.
    \end{itemize}

\noindent
The issue is this: with modern pre-trained large neural models 
capable of approximating a vast range of probability distributions, the difficulty is
in finding verifiable ways of constraining them to `focus'.
However, this still begs the question of whether, \textit{in principle\/}, 
the task can be achieved using a purely symbolic or purely connectionist approach 
(``unified'' approaches in \citep{hilario2013overview, Hila:etal:p:1994}).

In principle, the answer is ``yes''. For a symbolic generator, we would need
to identify a symbolic description
$S$ to include probability values over the elements of ${\cal U}$:
logic programs with a distributional semantics like \citep{sato1997prism} or
\citep{de2007problog} are examples of this.
The symbolic description can then be directly  used for generating  instances in
by sampling. The difficulty, however, is that symbolic representations are usually
not fine-grained enough to capture very complex probability distributions; which are better
approximated with the high-dimensional real-vector encodings (embeddings) used by
neural networks.
This is especially apparent with using language models for accessing complex conditional probability
distributions. 

What about a purely connectionist approach? Again, in principle, a neural network can
encode a predicate such as $\Phi(\cdot)$. The difficulty here is that this encoding
becomes increasingly more approximate
if obtained from very few data samples (large language models have proved remarkably
successful at identifying appropriate text-responses with very few examples; it is not
clear as yet if this extends robustly to logical formulae). The lack of a precise, verifiable
formal description from a neural model also presents a challenge. Of
course, if human-understandability of the description is also required, then we face
well-known difficulties with a neural encoding.
Thus, the continued interest in hybrid systems is not
for theoretical but for practical reasons. Hybrid systems can be engineered from
existing modules; are easier to maintain,
and are more amenable to controlled studies, especially if each component is largely
concerned with a specific function. 

In this paper, we propose using a symbolic component that
examines possible approximations for $\Phi(\cdot)$ by constructing
definitions for a predicate $\Sigma(\cdot)$. Each such definition is accompanied by a
set of instances obtained from a neural generative model
for which $\Sigma(\cdot)$ is true. Figure \ref{fig:venn} shows the outcome we would
like ideally, and what we would have to settle for in practice.
This requires us to deal with
a \textit{dual-alignment} problem: we would like the symbolic
component to find an $S$ that aligns well with ${\cal X}$; and a neural
generator that identifies an $N$ that aligns well with $S$
(denoted by the set $X$).
 This combination using a symbolic learner and a neural-based generator constitutes a
 \textit{Symbolic Neural Generator\/}, or SNG.\footnote{We will sometimes use
 SNG to stand for the computation process of \textit{symbolic neural generation}, with
 the context making it clear whether we mean a system or the process.} Figure \ref{fig:venn}
 shows what we would like ideally, and what often results in practice.
 In the taxonomy proposed in \citep{hilario2013overview}, SNGs are a
 \textit{hybrid} neurosymbolic\footnote{There have been several variations
of this term, such as ``neural-symbolic''~\citep{Garc:etal:b:2009}. Current usage seems to have
converged on the single word ``neurosymbolic''~\citep{Garc:Lamb:j:2023}.} approach
 since they contain distinct neural and symbolic components.

\begin{figure}[!htb]
\begin{subfigure}{0.33\textwidth}
    \centering
    \includegraphics[height=0.225\textheight]{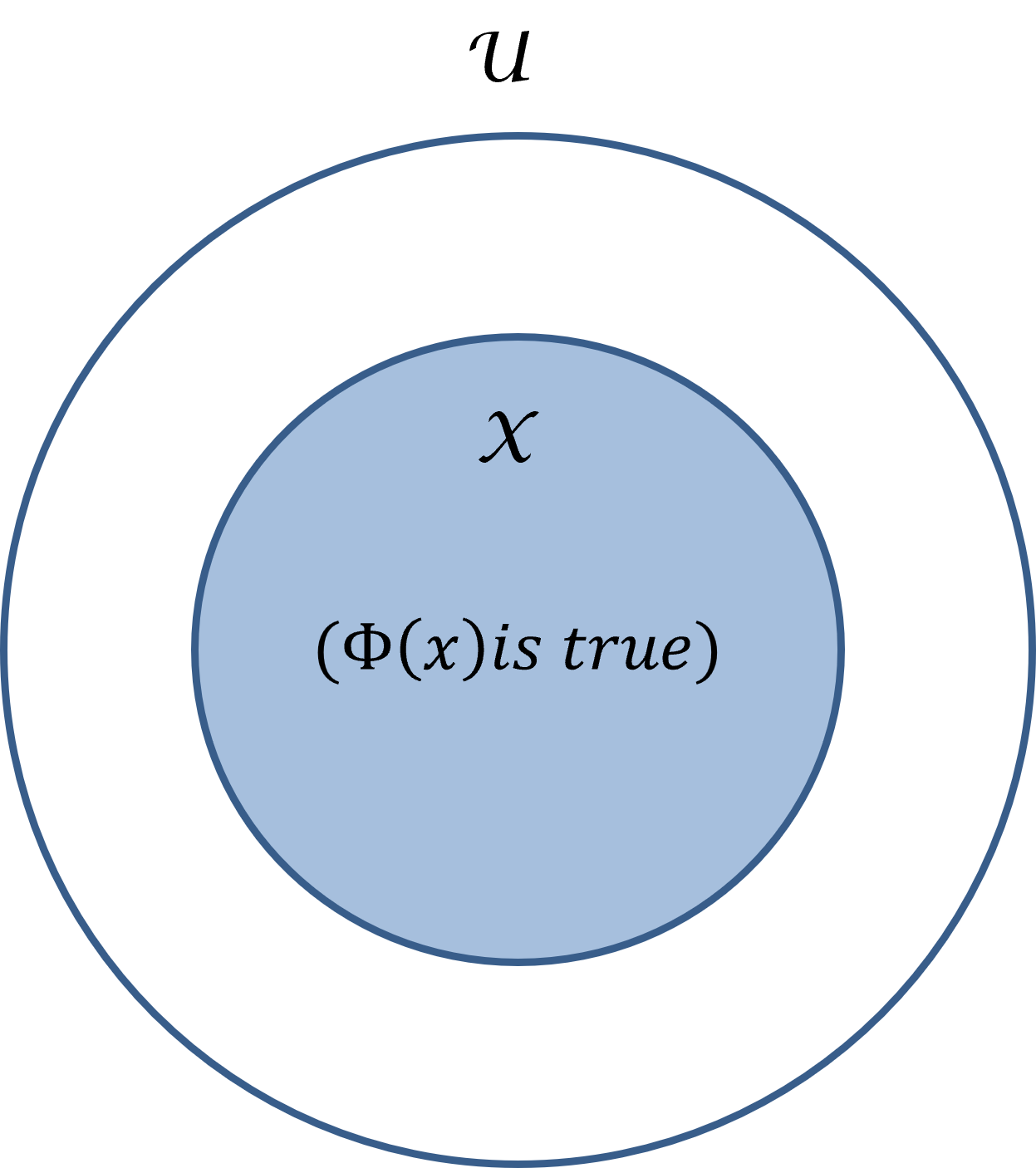}
    \caption{Ideal}
\end{subfigure}%
\hfill
\begin{subfigure}{0.33\textwidth}
    \centering
    \includegraphics[height=0.225\textheight]{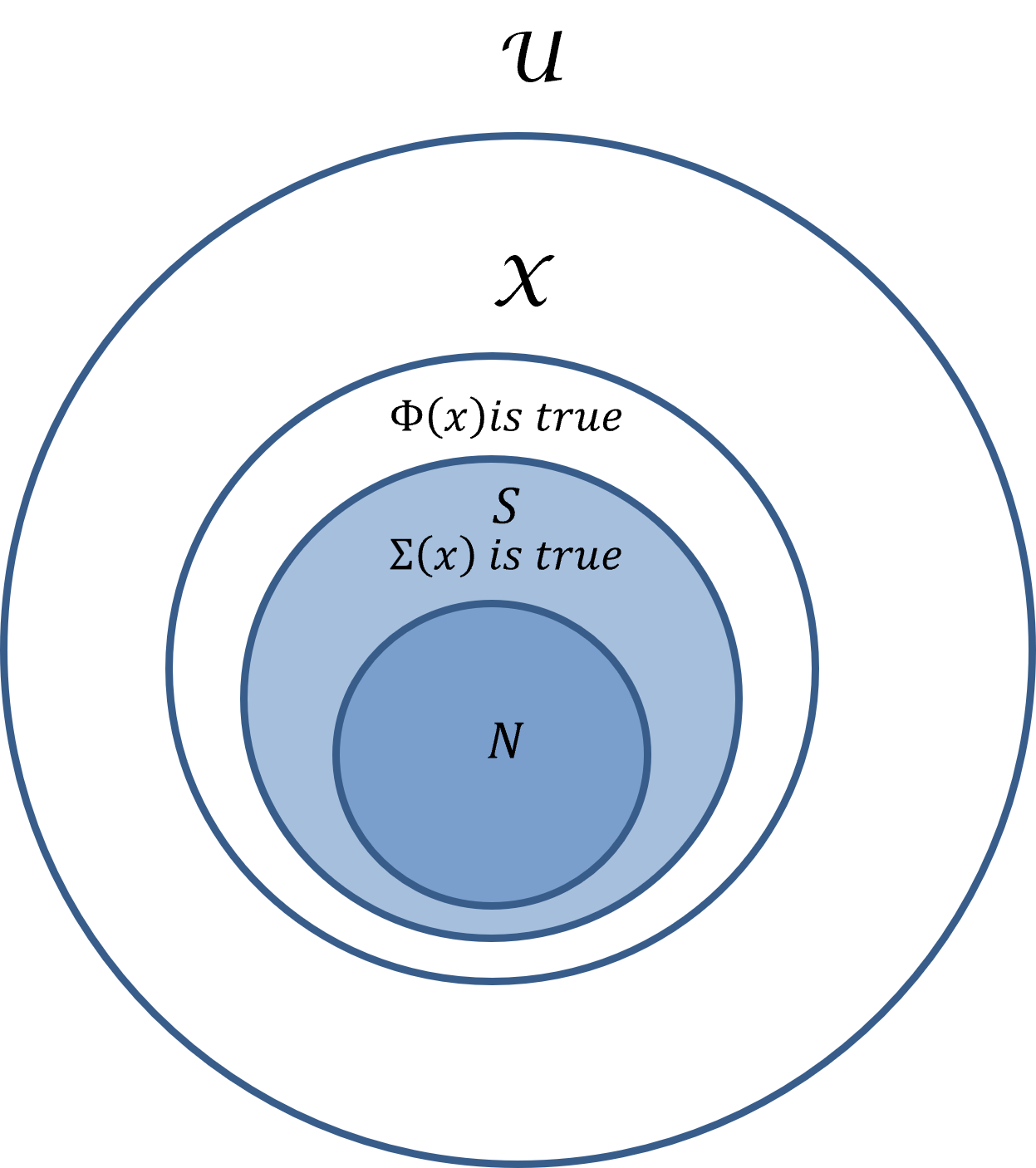}
    \caption{Ideal SNG}   
\end{subfigure}%
\hfill
 \begin{subfigure}{0.33\textwidth}
 \centering
    \includegraphics[height=0.225\textheight]{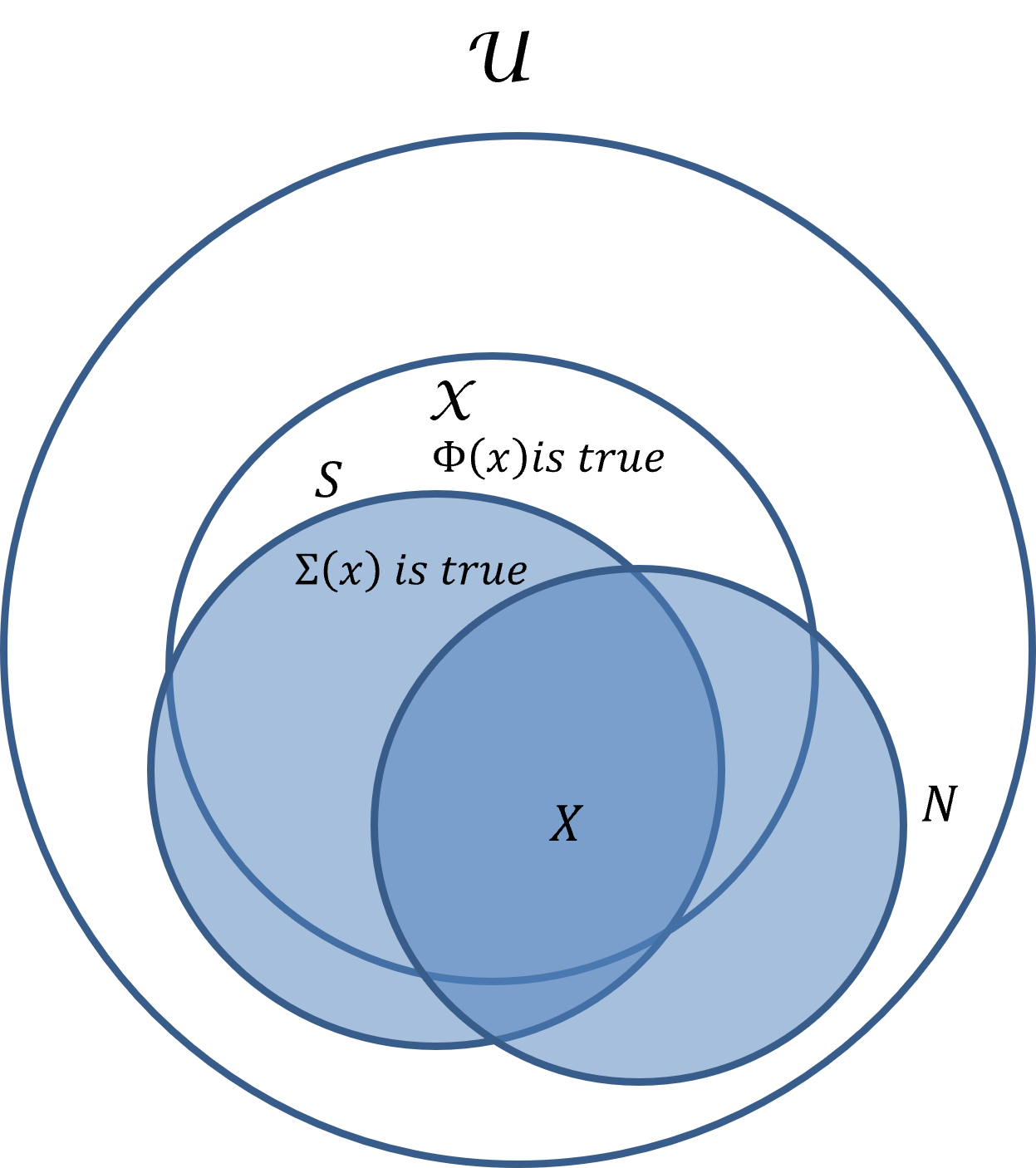}
    \caption{Practical SNG}   
\end{subfigure}
\caption{(a) Ideally, we would like to generate instances from the set of instances
    for which $\Phi(x)$ is true; (b) When $\Phi(\cdot)$ is not known, we approximate $\Phi(\cdot)$
    by $\Sigma(\cdot)$, obtained using the hypothesis from
        a symbolic learner. We want to sample instances
    efficiently from $S$. $N$ is the set of instances obtained from a neural-based generator.
    For an ideal SNG, $N \subseteq S \subseteq {\cal X}$;
    (c) In practice, the symbolic learner may not be perfect, and
        the neural-generator only has an approximate model of the conditional distribution. 
        The set $X$ is the set of instances generated that are in $S$.}
\label{fig:venn} 
\end{figure}

\noindent
 We see the paper contributing in the following ways to research into
 neurosymbolic systems:
 
 \begin{itemize}
    \item We identify a class of hybrid neurosymbolic systems -- called
    Symbolic Neural Generators, or SNGs -- which draws on
        the strengths of symbolic learning and neural-based generative models.
        Potential symbolic descriptions are examined, and at the same time:
        (a) act as conditioning information for neural-based encodings
        of probability distributions; and (b) verify samples of new data instances
        generated by the neural component. The output of an
        SNG is a human-readable
        symbolic description and a set of instances consistent with that description.
     \item We provide a novel semantic framework for SNGs
        based on poset semantics. In this formulation, each
        element of a partial ordering over symbolic hypotheses
        is associated with a (second) partial order over
        neural-generated instances that are consistent with
        the hypothesis. The semantics provides a precise specification of the codomain of an SNG. 
    \item We implement and test SNGs on the real-world problem of 
        generating potential
        inhibitors for protein-targets. This constitutes the problem of `lead-discovery',
        and realistic settings have the following characteristics. Given:
        (a) a few known inhibitors (in our main case study, only 5), and
        (b) some amount of prior biological and chemical knowledge; we want:
        (c) human-readable constraints on potential new inhibitors, and
        (d) proposals for potentially new inhibitors for the target. Our
        experiments show that SNG performs creditably on benchmarks from the literature,
        and -- on an open problem -- provides results that are understandable and
        interesting to a synthetic chemist and a structural biologist.
 \end{itemize}

\section{Background: Hybrid Neurosymbolic Modelling}
\label{sec:rel_work}

 
The idea of neurosymbolic modelling for AI is not new; as has often been noted, 
the seminal paper of~\cite{McCu:Pitt:j:1943} 
rests on the observation that ``the activity
of any neuron can be represented as a
proposition''. Thus arises an entire class of systems comprising a purely
connectionist approach that approximates the patterns of
categoric inference arising when using (often propositional, but not always)
logical representations, as well
as the patterns of plausible  inference that arise when using probabilistic
representations. There is now a very large literature on this kind
of system -- called a \textit{unified} neurosymbolic system in
\citep{hilario2013overview,Hila:etal:p:1994} -- and this forms a prominent
part of the current landscape of neurosymbolic AI. We will not attempt an
exhaustive review of unified neurosymbolic literature.
 For extensive treatments of that
see:~\citep{Garc:etal:b:2009,hitzler2022neural,Garc:Lamb:j:2023,DeSm:DeRa:x:2025,luc:deeplog}
We note that while much of the work focuses on engineered solutions,
we are now seeing the emergence of  more abstract, mathematical
perspectives on what represents a unified neurosymbolic system.
Thus, \cite{Oden:Garc:j:2025} are concerned with identifying the
conditions of semantic equivalence between architectures and representations;
and \cite{DeSm:DeRa:x:2025} provides the mathematical underpinning
of a uniform inference mechanism for reasoning
in systems containing both symbolic and neural components.
 Also under the category of unified systems,
 we do not present here any of the vast array of modelling and inference
methods that continue to be developed using 
probability theory of continuous and discrete
random variables and their associated distributions. Ultimately
under the category of unified symbolic approaches, it is possible to provide a probabilistic
semantics to (conditional and unconditional) generative models, irrespective of
whether distributions are encoded by neural networks, probabilistic programs, graphical
models or the like (we refer the reader to \citep{murphy:pml2}, Chs. 20--28
for an extensive treatment) and to~\citep{Marr:etal:j:2024}
for the related area of statistical relational learning.

In this paper, we are concerned with
the alternative form of \textit{hybrid neurosymbolic systems}
that contains distinct neural and symbolic
components. It is helpful to categorise hybrid systems  based on the
 the principal function of each component in the system (Fig.~\ref{fig:categ}).\footnote{
 In practice, neither component does just one function or the other.
 For example, any non-trivial learning (induction) usually
requires reasoning (deduction), and any sufficiently complex reasoning would usually
be accompanied by some trial-and-error learning. Therefore, this categorisation is
necessarily a simplification.}
SNG as we propose it refers to a sub-class of systems in Category (B). 

\begin{figure}[htb]
\begin{minipage}{0.3\textwidth}
\begin{subcaption}
\centering
\begin{tabular}{cc|c|c|} 
      &\multicolumn{1}{c}{\mbox{}} & \multicolumn{2}{c}{Symbolic} \\
      & \multicolumn{1}{c}{\mbox{}} & \multicolumn{1}{c}{$Reason$}    & \multicolumn{1}{c}{$Learn$}  \\ \cline{3-4}
    \multirow{4}{*}{\rotatebox{90}{Neural}} & $Reason$ & $(A)$ & $(B)$ \\
             &              &           &        \\ \cline{3-4}
             &  $Learn$     & $(C)$     & $(D)$  \\ 
             &              &           &        \\ \cline{3-4}
\end{tabular}
\label{fig:categ_table}
\end{subcaption}
\end{minipage}%
\hfill
\begin{minipage}{0.55\textwidth}
\begin{subcaption}
\centering
{\scriptsize{
\begin{tabular}{cl}\hline
Category & Examples \\ \hline
(A) & Neural theorem proving; Plan verification;\\
    & Safe neural controllers;  \\
    & Neural proof search \\ \hline
(B) & Grammar learning; \\
    & Learning explanations for neural behaviour\\
    & ILP with neural inference; \\
    & Symbolic models for neural generators  \\ \hline
(C) & Schema learning with ontologies; \\
    & Deep RL with symbolic constraints \\
    & Semantic-loss based classification;\\
    & Design generation with spatial constraints \\ \hline
(D) & Hybrid KG construction; \\
    & Differentiable ILP; \\
    & Symbolic concepts from neural representations\\
    & Program synthesis with learned primitives \\ \hline
\end{tabular}
}}
\label{fig:categ_examples}
\end{subcaption}
\end{minipage}
\caption{(a) A categorisation of hybrid neurosymbolic systems that consist of distinct
    Neural and Symbolic components. The primary role of each component is to
    $Learn$ or to $Reason$; (b) Examples of neurosymbolic systems in each category.}
\label{fig:categ}
\end{figure}

We are concerned with a restricted class of hybrid
neurosymbolic systems whose task it is to `generate' new data instances using (models for)
conditional or joint probability distributions.
In the rest of this section, we will
highlight some recent relevant work that satisfies these criteria,
in the categories shown in Fig.~\ref{fig:categ_table}.

An emerging trend for generative models in Category A (Neural-Reasoning, Symbolic-Reasoning)
is the use of pre-trained LLMs that
invoke ``tools''~\citep{Kautz:2024:Tools} to verify correctness of instances
generated by the LLM. In particular, such hybrid systems can
invoke reasoning systems -- theorem-provers, for example -- implementing sound inference in symbolic logic~\citep{cheng2025empowering_alt}.
For example, in a system like LINC~\citep{Olau:etal:x:2024} an LLM is used to translate problem statements in natural language to expressions in first-order logic, and then attempts to
find proofs using a theorem prover.
LLMs with tools are able to exceed the formal reasoning capabilities of LLMs, which can struggle on logical reasoning tasks without additional training~\citep{Li:etal:p:2024:LLM_Generation_for_Math,Qi:etal:2025:LLMs_Logic_FineTuning}.
The AlphaGeometry2 system~\citep{Chervonyi:etal:2025:AlphaGeometry} uses a language model to take in natural language statements of mathematical problems and generate formal expressions of problem facts in a domain-specific language for IMO geometry problems. A deductive database algorithm computes the closure of these facts, which is searched in parallel for solutions, with a language model used to generate the proofs. Closely related to the work here,
the mechanism of language models with logical-feedback (LMLF) in \citep{brahmavar2024generating}
can be seen as an LLM equipped with a symbolic reasoner. The symbolic component performs two tasks: (a) using prior knowledge it progressively deduces new constraints that in turn
modify the context of the LLM; and (b) it verifies that instances generated by the LLM
satisfy the constraints. LLMs are used here for generation and to provide explanations
in a natural or specialised language. The symbolic reasoner can also provide additional
justification, by displaying the reasoning steps used to verify the output generated
by the LLM.
 
While there is a relative paucity of generative hybrid systems
in Category B (Neural-Reasoning, Symbolic-Learning), some special-purpose systems do exist. 
For example, building on earlier work in which pre-defined grammars were used~\citep{Kusner:etal:p:2017}, a symbolic generative grammar defined on molecular hypergraphs was trained utilising probabilistic learning in~\citep{Guo:etal:x:2022}.
The approach uses bottom-up grammar learning, where a molecule, represented as a hypergraph on the left-hand side of a production rule, generates a new molecule (by adding a
hyperedge) on the right-hand side.
A pre-trained neural network is used as a molecular feature selector and a probabilistic model is conditioned on the data using Monte Carlo sampling and gradient ascent to maximise score across molecular metrics.
Molecules are then probabilistically sampled from this grammar, providing a data-efficient (i.e., it learns from a few molecules) and interpretable molecular generator. 
More recently, a pre-trained multi-modal foundation model was used to guide the symbolic grammar learning, based on prompting and neural reasoning, which is then used to generate molecules~\citep{Sun:etal:x:2025:MMFM_MolGen}. Although a symbolic theory is learned, both
learning and reasoning are done by neural systems, and this recent
work should be treated as a unified neurosymbolic system. 
Within robot planning, actions can be modelled as sets of predicates on the environment that represent pre-conditions or post-conditions.
In~\citep{Liang:etal:x:2025:ExoPredicator} an LLM generates predicates corresponding to a robot trajectory in the environment, where a plan either succeeded or failed.
Generated predicates are converted to Python code, filtered against an API and used in a search for state abstractions.
This corresponds to a form of program synthesis,
in which a causal process world model for planning is learned using a probabilistic approach.
This approach can be characterised as neural reasoning (generation) and symbolic (probabilistic) learning.

In contrast, there is more done on general-purpose generative
hybrid approaches in Category C (Neural-Learning, Symbolic-Reasoning).
For example, diffusion models are widely used to generate images, but can also be applied to generate discrete outputs such as language, or molecules as SMILES strings.
However it is difficult to control generation to ensure certain requirements on the
outputs are satisfied.
A neurosymbolic approach to diffusion is in~\citep{christopher2025neurosymbolicgenerativediffusionmodels}.
To check for toxicity in molecule generation several black-box filters implemented
in RDKit were selected.
Their outputs are then used in constraints in a convex optimization solver which is integrated into the diffusion process.
SPRING~\citep{Jacobson:Xue:j:2025:NeSy_DesignGen} is a related approach which
incorporates a symbolic spatial reasoning module into diﬀusion-type models.
A user-defined design language on spatial predicates and object relations 
enables greater control of generated designs.

Also in Category C is the important class of neural networks trained to generate
molecules that are constrained by symbolic specifications, such as target properties 
of the molecule, or grammars or graph structures capturing chemical knowledge,
e.g.,~\citep{lim2018molecular,liu2018constrained,dash2021using}.
A neurosymbolic relational generative model in this category is the Neural Markov Logic Network~\citep{marra2021neural}, which generalises Markov Logic Networks by replacing hand-specified weighted first-order rules with a neural potential function defined over $k$-sized fragments of a relational structure. The model is trained by maximum-likelihood with Gibbs sampling for the partition function, and chemical validity in the molecule generation experiment was enforced through rejection sampling using RDKit.
This category also includes pre-trained LLMs fine-tuned with data expressed in molecular
languages such as SMILES, and constrained to ensure correct generation with symbolic models~\citep{zhang2025scientific}.
A recent approach~\citep{Zhou:etal:j:2025} uses molecular analysis tools, some based on
pre-trained statistical machine learning, to generate a range of relevant properties 
which are converted to text using natural language templates. These molecule-text pairs 
are added to the training set to fine-tune an LLM, which can then generate new 
molecules from text prompts with high probability of having the relevant properties.

Category D (Neural-Learning, Symbolic-Learning)
represents, in some sense, the most complex category of hybrid neurosymbolic
systems in Fig.~\ref{fig:categ_table}.
Joint learning of neural and symbolic models is also a feature
of \cite{mugg:abindraw}: although neither model is generative, there is no
reason in principle why meta-interpretive learning (the approach used in that
paper) cannot be used to learn generative neural models guided by
symbolic hypotheses. In principle, the unified framework of
\cite{DeSm:DeRa:x:2025} allows the joint learning of neural and symbolic models. It
is unclear whether the related implementation DeepLog \citep{luc:deeplog} -- not
to be confused with the identically named symbolic learner
described in \citep{muggleton2023deeplog} -- has the
operations required to achieve this, but we do not see any difficulty in
principle.

\section{Symbolic Neural Generation}
\label{sec:sng}

Consider again the problem of identifying instances from the set:
\[
    {\cal X} = \{x: x \in {\cal U}, \Phi(x)\ \mbox{ is true }\}
\]

\noindent
with the condition that $\Phi(\cdot)$ is not known. However, we do
have some instances of ${\cal U}$ for which membership in ${\cal X}$ (or otherwise)
\emph{is} known. There may also be some problem-specific background knowledge that may
be of help. In symbolic neural generation, we approach this problem by using what
we know to examine an approximation $\Sigma(\cdot)$ to $\Phi(\cdot)$
and using this to constrain the subsequent selection of instances using a neural-based
sampler. This two-component approach can be seen as an instance of a
hybrid system in Category B of the previous section. 
We will now attempt a clearer understanding of this kind of system.

\subsection{Poset Semantics for SNGs}
\label{sec:sem}

We present a simple semantics for hybrid neurosymbolic systems
that perform symbolic neural generation. 
The semantics serves two roles: (i)~it specifies the \emph{codomain}
of any SNG system -- the kind of object an SNG is required to return,
namely a pair $(H,X)$ having a symbolic hypothesis $H$ and a set $X$
of valid neural samples consistent 
with $H$; 
and (ii)~it forms the basis of
a proof of correctness for the implementation
(Prop.~\ref{prop:gencorrect} and
Remark~\ref{rem:sng}). We rely throughout on the
partial order $\geq_{\mathcal H}$ on hypotheses
and, for each hypothesis $H \in {\mathcal H}$, the set of
subsets of the instances satisfying $H$ in the universe of
instances.\footnote{
Later we suggest alternatives and extensions
that apply to wider classes of hybrid neurosymbolic systems.}

Let $\mathcal U$ denote a fixed finite universe of instances.
Let $B$ be background knowledge and let
$\mathcal H$ be a family of symbolic hypotheses.  Assume each hypothesis $H$ in $\mathcal H$ has a definition of a fixed unary predicate
$\Sigma$ which is defined over $\mathcal U$.

\begin{mydefinition}[Extension of $H$]
 \label{def:ext}
 Let $H \in {\cal H}$.
 The semantic extension of  $H$ given $B$, denoted
 by $\operatorname{ext}(H|B)$, is the set $\{x: x \in {\cal U}, \, B \wedge H \models \Sigma(x)\}$.
 We will call this simply the extension of $H$ and
 denote it by $\operatorname{ext}(H)$ if $B$ is clear from the context.
 \end{mydefinition}

\noindent
We now describe the structure of SNGs, starting with the
\textit{base poset} $(\mathcal H,\le_{\mathcal H})$.

\begin{mydefinition}[Ordering over ${\cal H}$]
 \label{def:pohyp}
 Let $B \in {\cal B}$ be background knowledge and
 $H_1, H_2 \in {\cal H}$ be hypotheses.
 We define $H_1 \geq_{\cal H} H_2$ as $\operatorname{ext}(H_1|B) \supseteq \operatorname{ext}(H_2|B)$.
 \end{mydefinition}

\begin{myproposition}
$({\cal H},\geq_{\cal H})$ is a partially ordered set.
\end{myproposition}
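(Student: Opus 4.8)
The plan is to verify the three defining properties of a partial order---reflexivity, transitivity, and antisymmetry---by observing that $\geq_{\mathcal{H}}$ is nothing more than the pullback of the superset relation $\supseteq$ on the power set $\mathcal{P}(\mathcal{U})$ along the extension map $\operatorname{ext}(\cdot \mid B)\colon \mathcal{H} \to \mathcal{P}(\mathcal{U})$. Since $\mathcal{U}$ is fixed and finite and $B$ is fixed, this map is well defined by Definition~\ref{def:ext}, and $(\mathcal{P}(\mathcal{U}), \supseteq)$ is itself a partial order. The general fact driving the argument is that pulling back a partial order along \emph{any} function always yields a relation that is at least reflexive and transitive, so two of the three properties come essentially for free; only the third requires thought.

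Concretely, I would first check reflexivity: for any $H \in \mathcal{H}$ we have $\operatorname{ext}(H \mid B) \supseteq \operatorname{ext}(H \mid B)$, hence $H \geq_{\mathcal{H}} H$. For transitivity, suppose $H_1 \geq_{\mathcal{H}} H_2$ and $H_2 \geq_{\mathcal{H}} H_3$; then $\operatorname{ext}(H_1 \mid B) \supseteq \operatorname{ext}(H_2 \mid B)$ and $\operatorname{ext}(H_2 \mid B) \supseteq \operatorname{ext}(H_3 \mid B)$, and transitivity of $\supseteq$ yields $\operatorname{ext}(H_1 \mid B) \supseteq \operatorname{ext}(H_3 \mid B)$, i.e. $H_1 \geq_{\mathcal{H}} H_3$. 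Both steps are immediate and inherit directly from the corresponding properties of $\supseteq$, requiring no calculation beyond unfolding the definition.

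The main obstacle is antisymmetry. Taken literally, $H_1 \geq_{\mathcal{H}} H_2$ together with $H_2 \geq_{\mathcal{H}} H_1$ only forces $\operatorname{ext}(H_1 \mid B) = \operatorname{ext}(H_2 \mid B)$; it does \emph{not} force $H_1$ and $H_2$ to be the same syntactic hypothesis, since two distinct logic programs can have identical extensions over $\mathcal{U}$. Thus on raw syntactic hypotheses the relation is in general only a \emph{preorder}. To obtain a genuine poset I would fix the extensional reading of equality on $\mathcal{H}$: either declare $H_1 = H_2$ whenever $\operatorname{ext}(H_1 \mid B) = \operatorname{ext}(H_2 \mid B)$, or equivalently pass to the quotient $\mathcal{H}/{\sim}$ under the equivalence $H_1 \sim H_2 \iff \operatorname{ext}(H_1 \mid B) = \operatorname{ext}(H_2 \mid B)$. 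Under either convention the extension map becomes injective on equivalence classes, antisymmetry of $\geq_{\mathcal{H}}$ then follows from antisymmetry of $\supseteq$, and $(\mathcal{H}, \geq_{\mathcal{H}})$ is a partial order. I would state this convention explicitly, since it is precisely the point where the argument is not purely formal: the semantics must decide that hypotheses are identified by their extensions, which is also the natural choice given that this poset is intended to serve as the base of the Grothendieck construction.
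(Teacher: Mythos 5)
Your proof is correct and takes essentially the same route as the paper: the paper's entire proof reads ``Follows trivially from the reflexive, anti-symmetric and transitive properties of subset-inclusion,'' i.e.\ exactly your pullback-of-$\supseteq$ argument. Where you go beyond the paper is the antisymmetry discussion, and you are right to do so: the paper's one-line proof silently assumes what you make explicit. On raw syntactic hypotheses, $H_1 \geq_{\mathcal H} H_2$ and $H_2 \geq_{\mathcal H} H_1$ only give $\operatorname{ext}(H_1 \mid B) = \operatorname{ext}(H_2 \mid B)$, so $\geq_{\mathcal H}$ as defined is in general a preorder, and the proposition as stated holds only under the convention you spell out --- identifying hypotheses with equal extensions, or passing to the quotient $\mathcal H/{\sim}$. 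This is not a pedantic point: the base poset is subsequently fed into the Grothendieck construction (Definition~\ref{def:groth}), where the fibre $F(H)$ and the reindexing maps $\rho_{H_1,H_2}$ depend on $H$ only through $\operatorname{ext}(H \mid B)$ and $N(H)$, so the extensional identification is indeed the natural (and needed) reading. Your version is the more careful one; the paper would benefit from stating the convention you articulate.
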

\begin{proof}
Follows trivially from the reflexive, anti-symmetric and transitive
properties of subset-inclusion.
\end{proof}


\noindent
We will call $(\mathcal H,\ge_{\mathcal H})$ the \textit{base} poset.
For each $H\in\mathcal H$ we associate a \emph{fibre-poset}
which depends on the neural component. For the present, let the neural generator be a function
$f_N: {\cal H} \times Z \to {\cal U}$ where $Z$ is some latent space, and let
\[
N(H) ~=~ \operatorname{im}(f_N(H)) ~=~ \{f_N(H,z): z \in Z\} \subseteq {\mathcal U}
\]
That is, $N(H)$ is a the set of all instances that can be generated in principle by the
neural component using the hypothesis  $H$.\footnote{
Thus, if the neural component is a stochastic model that
defines a probability distribution $P_N$ over $\mathcal U$,
$N(H) = \{{u \in \cal U}: P_{N(H)}(u) > 0 \}.$}
Later we consider extensions that account for more subtle variants.
\begin{mydefinition}[Fibre-poset of a base element]
\label{def:fiber}
For each $H \in {\mathcal H}$, we define a
corresponding fibre-poset:
\[
F(H) \;=\; \bigl(2^{N(H) \, \cap \, \operatorname{ext}(H\mid B)},\ \subseteq\bigr).
\]

\noindent Even though $F(H)$ is defined as the pair containing the set and the order, sometimes we use $F(H)$ to denote the corresponding set. 

\end{mydefinition}

\noindent
We define the set of $(H,X)$ pairs an SNG can return.

\begin{mydefinition}[Set of candidate $(H,X)$ pairs]
\label{def:groth}
Given the base-poset $\mathcal H$ and, for each $H$, the associated
collection of valid instance-sets, we define
\[
\mathcal F \;=\; \{\, (H,X) \mid H\in\mathcal H,\ X\subseteq N(H) \cap \operatorname{ext}(H\mid B)\,\}.
\]
\end{mydefinition}%

\noindent
The schematic depiction of the base
poset and the associated fibre-posets is shown in Fig.~\ref{fig:fiber},
along with the corresponding ${\mathcal F}$.


\begin{figure}[!htb]
\centering
\begin{subfigure}{0.48\textwidth}
\centerline{\includegraphics[height=0.225\textheight]{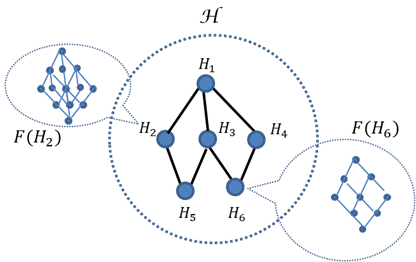}}
  \caption{}
\end{subfigure}%
\begin{subfigure}{0.48\textwidth}
\centerline{\includegraphics[height=0.225\textheight]{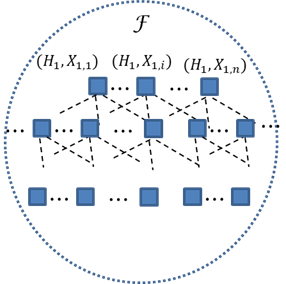}}
\caption{}
\end{subfigure}
\caption{(a) Posets indexed by elements of a base poset ${\cal H}$. Each
    element $H$ of the base poset is associated with a fibre-poset $F(H)$.
    (b) Pairs $(H,X)$ with $H \in \mathcal H$ and $X \in F(H)$ form
    the set $\mathcal F$ of candidate hypothesis--instance-set combinations
    an SNG returns. The set ${\mathcal F}$ is shown as a partially-ordered
    set: see text for details.}
\label{fig:fiber}
\end{figure}

Elements of ${\mathcal F}$ are the candidate neural--symbolic pairs
that an SNG can return. We note the following about ${\mathcal F}$:

\begin{myremark}
    \item The set ${\mathcal F}$ is partially ordered. It is straightforward
        to show that the relation $\geq_{\mathcal F}$ defined as
        $(H_1,X_1) \geq_{\mathcal F} (H_2,X_2)$ iff
        $(H_1 \geq_{\mathcal H} H_2)$ and $(X_1 \supseteq X_2)$ induces
        a partial ordering on ${\cal F}$.
    \item In the worst-case,
        $|{\mathcal F}| \leq |{\mathcal H}| \cdot  2^{|{\cal U}|}$.
        However, the requirement that $F(H) \subseteq 2^{\operatorname{ext}(H|B)}$
        can result in a reduction in the actual size of
        ${\cal F}$. However this will only reduce the upper
        bound to $|{\mathcal H}| \cdot  2^n$ where
        $n = \max_H 2^{|\operatorname{ext}(H|B)|}$.
\end{myremark}

These observations suggest that it can be impractical to search 
${\mathcal F}$ for a suitable $(H,X)$ pair. The implementation
we use for real applications below uses
$\geq_{\mathcal H}$ and a ``goodness'' score associated
with $(H,X)$ pairs to drive a a greedy search. This reduces the size of
the search-space substantially,
at the cost of returning a locally optimal $(H,X)$ pair.

\paragraph{Note on Probabilistic Extension.}
We note two sources of uncertainty in hybrid neurosymbolic
systems:
(a) The symbolic hypotheses  may not
all be equally likely, given data and background knowledge;
(b) The neural generator is usually stochastic and the support-sets
are samples to which we can attach a probability (of obtaining the sample
under the sampling distribution conditioned by $H$ and $B$). 
Accounting for these uncertainties will require an
extension to a probabilistic semantics that specifies
a probability distribution over the elements of ${\mathcal F}$.
For the present, the scoring function we use in the implementation to weigh
$(H,X)$ pairs can be viewed as an implicit specification of
an underlying distribution.

In addition to the sets defined so far, we may know beforehand some elements of
${\cal U}$ for which $\Phi(\cdot)$ is true or false.
These are provided as pairs of subsets
from $2^{\mathcal U} \times 2^{\mathcal U}$, which we denote by $\mathcal E$. We can now specify a class of functions we
call \textit{symbolic neural generators}.

\begin{mydefinition}[Symbolic Neural Generator]
 \label{def:sng}
 Let $\mathcal U$, $\mathcal B$, $\mathcal E$, $\mathcal H$ be as above, with
 $(\mathcal H,\ge_{\mathcal H})$ partially ordered by extensions
 (Defn.~\ref{def:pohyp}) and $\mathcal F$ be the
 set of $(H,X)$ pairs as before.
 A Symbolic Neural Generator is any function
 $SNG: {\cal E} \times {\cal B} \to {\cal F}$.
\end{mydefinition}

\subsection{Implementing SNGs}
\label{sec:sngimp}

As defined, an SNG can be implemented by a function
that returns an $(H,X)$ pair, given as inputs a
sample of data and background knowledge.
In designing an implementation, we are required
to address 3 questions:
(i) How do we identify the $H$'s;
(ii) How do obtain the $X$'s; and
(iii) How do we search for a ``good'' $(H,X)$ pairs?
    
Of these, Question (ii) is the most pressing, since it is
here that neural and symbolic components interact.\footnote{Question (i) 
is concerned with constructing
symbolic hypothesis, given examples and background knowledge.
Techniques developed in ILP are especially good at this.
Question (iii) is about searching graphs efficiently, for
which several alternatives are known.}

Procedure \ref{alg:nstar} is a general-purpose
implementation for addressing Question (ii) using
a large language model (LLM) as a generator of
instances from ${\cal U}$. 
\Gen employs rejection sampling with contextual updates
to ensure that elements in $X$ are in $\operatorname{ext}(H|B)$.\footnote{
We do not have to explicitly construct $\operatorname{ext}(H|B)$ for this.
Instead, we include $x$ in $S$ iff $B \wedge H \models \Sigma(x)$. This
check can be performed by a model-checker (or in some restricted cases,
by an inference engine). 
Also we
do not accept any instance $x$ that vacuously satisfies the requirement that
$B \wedge H \models \Sigma(x)$.
}
It  is reasonably common in the use of language models to include some
data in the initial context \citep{brown2020language}. For simplicity, we will
assume these are provided in $E$.

\begin{algorithm}[!htb]
    \caption{\Gen: An LLM-based implementation of the neural generator with
        rejection-sampling.}
    \label{alg:nstar}
    \textbf{Input}: $L$: an LLM capable of generating instances in a set
        conditional on the hypothesis $H$ (see below);
        $E$: a subset of known data instances;
        $B$: background knowledge as provided to symbolic learner;
        $H$: a symbolic hypothesis for $E$ given $B$;
        $n$: an upper-bound on the number of iterations; and
        $s$: an upper-bound on the number of samples to be drawn using the LLM; \\
    \textbf{Output}: a pair $(w,X)$ where $w \in [0,1]$ and $X \in F(H)$ $(= 2^{N(H) \cap \operatorname{ext}(H|B)})$. 
    
    \begin{algorithmic}[1]
    \STATE Let $C_0$ be the initial context containing a description
        of $H$ and $E$\; \label{step:start}
    \STATE $M_0 = \emptyset$\;
    \STATE $w_0 = 0$\;
    \STATE $i := 1$\;
    \WHILE{$i \leq n$} \label{step:startloop}
\STATE Let $P_i$ be a prompt to generate $x$ s.t. $x$ in ${\cal U}$, and it completes the sentence $true: x$ given context $C_{i-1}$\;
        \STATE $S_i = \mathit{Sample}(s,L,P_i)$\; {\gray{//sample at most $s$ instances from ${\cal U}$ using the LLM}}
        \STATE $D_i := \{(l,x): x \in S_i$ and $l := (x \in \operatorname{ext}(H|B)) \}$\; {\gray{//$l$ is either True or False}}\label{step:check}
        \STATE Let context $C_i$ be $C_{i-1}$ updated with $\mathit{true}:x$ for $(\mathit{true},x) \in D_i$ and $\mathit{false}:x$ for $(\mathit{false},x)$ in $D_i$\; 
        \STATE $M_i = \{x: (\mathit{true},x) \in D_i\} \cup M_{i-1}$\; \label{step:update}
        \STATE $i := i + 1$\;
    \ENDWHILE \label{step:endloop}
        \STATE $w_n := \frac{\mid M_{n}\mid}{(s \times n)}$
    \RETURN $(w_n,M_n)$
    \end{algorithmic}
\end{algorithm}

\noindent
 
The following proposition is evident enough from Step \ref{step:check} in
Procedure \ref{alg:nstar}, but is nevertheless worth reinforcing and the proof is given in Appendix \ref{app:correctnessGen}:

\begin{myproposition}[Correctness of \Gen]
\label{prop:gencorrect}
The set $M_n$ returned by Procedure \ref{alg:nstar}  is an
element of $F(H)$ and $w_n \in [0,1]$. 
\end{myproposition}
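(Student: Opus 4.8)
The plan is to verify the two claims separately: the membership $M_n \in F(H)$ and the weight bound $w_n \in [0,1]$. Recall from Definition~\ref{def:fiber} that $F(H) = (2^{N(H) \cap \operatorname{ext}(H|B)}, \subseteq)$, so an element of $F(H)$ is precisely a subset of $N(H) \cap \operatorname{ext}(H|B)$. Thus the membership claim reduces to establishing the single inclusion $M_n \subseteq N(H) \cap \operatorname{ext}(H|B)$, which I would split into $M_n \subseteq \operatorname{ext}(H|B)$ and $M_n \subseteq N(H)$.

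First I would establish $M_n \subseteq \operatorname{ext}(H|B)$ by induction on the loop counter. By Step~\ref{step:check}, for each $i$ the set of accepted instances $\{x : (\mathit{true}, x) \in D_i\}$ equals $S_i \cap \operatorname{nvext}(H|B)$, since the label $l$ is $\mathit{true}$ exactly when $x \in \operatorname{nvext}(H|B)$. Starting from $M_0 = \emptyset$ and unrolling the recurrence in Step~\ref{step:update}, we obtain $M_n = \bigcup_{i=1}^n \bigl(S_i \cap \operatorname{nvext}(H|B)\bigr) \subseteq \operatorname{nvext}(H|B)$. By Definition~\ref{def:nvext} we have $\operatorname{nvext}(H|B) \subseteq \operatorname{ext}(H|B)$, which gives the desired inclusion into $\operatorname{ext}(H|B)$.

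Next I would argue $M_n \subseteq N(H)$. Every instance collected lies in some $S_i = \mathit{Sample}(s, L, P_i)$, i.e. it is an output of the language model $L$ prompted with a context that encodes the hypothesis $H$. Identifying this conditioned generator with the neural map $f_N(H,\cdot)$ of Definition~\ref{def:fiber}, its outputs lie in $N(H) = \operatorname{im}(f_N(H))$ by definition, so $S_i \subseteq N(H)$ and hence $M_n \subseteq N(H)$. Combining the two inclusions yields $M_n \subseteq N(H) \cap \operatorname{ext}(H|B)$, so $M_n \in F(H)$. For the weight, non-negativity $w_n \ge 0$ is immediate since $|M_n| \ge 0$ and $s, n > 0$. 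For the upper bound, $\mathit{Sample}$ draws at most $s$ instances, so $|S_i| \le s$ and hence $|S_i \cap \operatorname{nvext}(H|B)| \le s$; since $M_n$ is the union over $i = 1,\dots,n$, subadditivity of cardinality gives $|M_n| \le \sum_{i=1}^n s = s n$, and dividing by $s \times n$ yields $w_n \le 1$.

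The only step that is more than bookkeeping is the inclusion $M_n \subseteq N(H)$, which rests on the identification of the LLM-based sampler (conditioned on the context describing $H$) with the abstract neural generator $f_N(H,\cdot)$ whose image defines $N(H)$. This identification is a modelling assumption rather than a derivation, so I would state it explicitly; once granted, the remainder is a routine induction together with a cardinality count.
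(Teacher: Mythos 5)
Your proposal is correct and follows essentially the same route as the paper's own proof: the paper establishes the loop invariant $M_{i-1} \subseteq N(H) \cap \operatorname{ext}(H|B)$ and $|M_{i-1}| \leq s \times (i-1)$ by induction on iterations, which is exactly your unrolled-union argument ($M_n = \bigcup_i S_i \cap \operatorname{nvext}(H|B)$) plus cardinality subadditivity, packaged differently. The one point where you go beyond the paper is in flagging $S_i \subseteq N(H)$ as a modelling assumption identifying the conditioned LLM sampler with $f_N(H,\cdot)$ --- the paper simply asserts this inclusion without comment, so your explicitness is a small but genuine improvement in rigour.
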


A clarification is needed on the role of the $w_i$. On each iteration,
this is the fraction of generated instances that are accepted as being within
$\operatorname{ext}(H|B)$. This is intended as a measure of `alignment' of
the neural-generator with the symbolic hypothesis. This becomes useful
later, where \Gen is used as part of an SNG implementation that interleaves
symbolic hypothesis search with neural-generation.

\subsection{Demonstration of a simple SNG}
\label{sec:synth}
A simple SNG results from adopting the following
``decomposition'' strategy: (a) First search the base poset to find good
symbolic hypothesis $H$ (this can be done with an ILP engine, for
example); and (b) Use the $H$ obtained in (a)  and Procedure
\Gen to obtain a sample of instances from $\operatorname{ext}(H|B)$.
Let us call this a \textit{chain-processing} SNG, based on the sub-categorisation
in \citep{hilario2013overview}.

We demonstrate a chain-processing  SNG using the chess endgame consisting of
the White King, White Rook and Black King. Two problems for this endgame have been studied in the
symbolic learning literature: learning rules for detecting illegal positions; and
identifying the minimum number of moves to a win for the Rook's side assuming it is
Black's turn to move. The first problem is the more widely studied, but is less
interesting since it follows almost directly from the rules of the game. We will use the second problem, and specifically on identifying positions that White is 0 moves
away from a win (one such example is shown in
Fig.~\ref{fig:gcws}(a)). That is, with Black to move it is checkmate and therefore won-for-white
(WFW). These positions constitute about 0.1\% of the entire dataset,
and is therefore a rare event. The goal of the SNG will be to:
(a) identify a symbolic hypothesis for WFW; and (b) generate
instances for this rare event. A symbolic
hypothesis has long been available in the literature, constructed
by the GCWS extension to the ILP engine Golem
\citep{bain:gcws} (see Fig.~\ref{fig:gcws}(b)). 

\begin{figure}[!htb]
\begin{subfigure}{0.4\textwidth}
    \centering
    \includegraphics[width=0.5\linewidth]{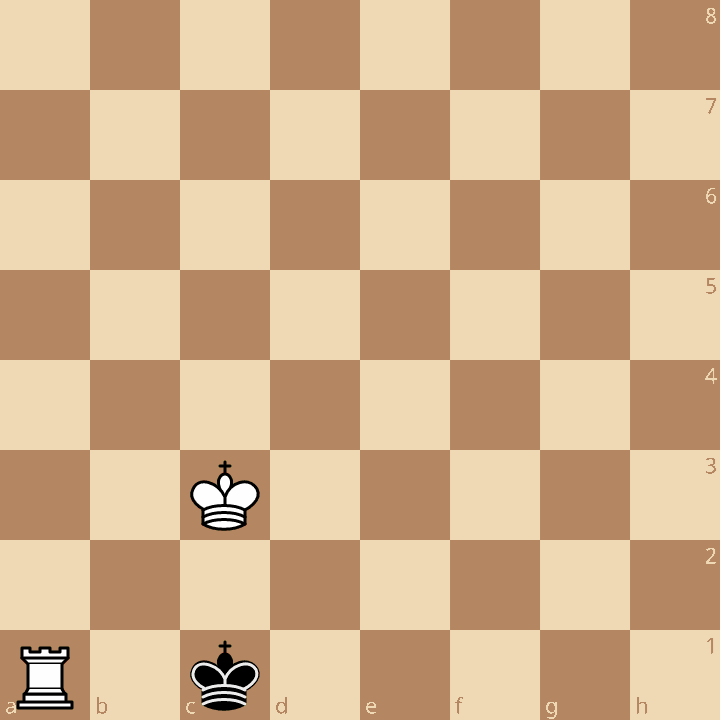}
    \caption{}
\end{subfigure}%
\hfill
\begin{subfigure}{0.4\textwidth}
    \centering
    {\scriptsize{
    \begin{tabbing}
        \= krk(A,B,\= \kill
        \> $H:$ \\
        \> $\Sigma$((WKF,WKR,WRF,WRR,BKF,BKR)) :- \\
        \> \> depth\_of\_win(0,WKF,WKR,WRF,WRR,BKF,BKR). \\[6pt]
        \> depth\_of\_win(0, c, 2, a, A, a, 2) :- not(ab3(0, c, 2, a, A, a, 2)).\\
        \> depth\_of\_win(0, c, A, a, B, a, 1) :- not(ab2(0, c, A, a, B, a, 1)).\\
        \> depth\_of\_win(0, A, 3, B, 1, A, 1) :- not(ab1(0, A, 3, B, 1, A, 1)).\\
        \> ab1(0, A, 3, B, 1, A, 1) :- diff(A, B, d1).\\
        \> ab2(0, c, A, a, 2, a, 1).\\
        \> ab3(0, c, 2, a, A, a, 2) :- diff(2, A, d1).\\
    \end{tabbing}
    }}
    \caption{}
\end{subfigure}
\caption{(a) A position that is ``won-for white'' (WFW) with ``black-to-move''. 
    Here depth-of-win is zero, i.e., checkmate. There are 27 such positions
    out of a total of 28,056; (b) A symbolic hypothesis obtained using ILP
    (adapted from \citep{bain:gcws}) (rewritten using $\Sigma$ as required).
    The 6-tuple $(\mathit{WKF},\mathit{WKR},\mathit{WRF},\mathit{WRR},\mathit{BKF},\mathit{BKR})$
    encodes board coordinates: $\mathit{WKF}$, $\mathit{WKR}$ are the file and rank of the
    White King; $\mathit{WRF}$, $\mathit{WRR}$ of the White Rook; and $\mathit{BKF}$, $\mathit{BKR}$
    of the Black King. For example $(c,3,a,1,c,1)$ is the coordinates of the position in the figure.
    In the context of this paper ${\cal U}$ consists of 6-tuples
    representing positions of the 3 pieces.
    The description is a Prolog-like syntax:
    variables start with upper-case, ``:-'' stands for $\leftarrow$, and ``not'' should
    be read as ``not provable''. The ``diff/3'' predicate is defined in the background
    knowledge and encodes file or rank differences. The ``ab'' predicates are new
    relations invented by the ILP system as it attempts to find a logical
    description for the depth-0 data instances. For example, in the position shown in (a), the invented ``ab1'' predicate ensures the white rook cannot immediately be taken by the black king. $\Sigma$ should be
    understood as ``WFW''.}
\label{fig:gcws}
\end{figure}

Below we show a summary of iterations of \Gen when
the language model used is GPT-4o, and the maximum sample-size is set to 30.
In the figure, ``Without Symbolic'' denotes there is no symbolic theory available; and
``With Symbolic'' refers to using the theory learned
by the ILP engine (Fig.~\ref{fig:gcws}(b)). ``0-shot'' means no
explicit examples of the concept WFW are provided within the hypothesis;
and ``5-shot'' refers to providing 5 (positive) instances of WFW.
From the results in Table~\ref{fig:chess_res} it is evident that the use of the symbolic model enables a progressive improvement in the
conditional generation by the LLM. Given the small number of positive
instances overall for the concept,
it is unsurprising that LLM does not obtain any positive
instances in the 0-shot samples  without a symbolic theory.\footnote{
The results are with a sample-size of $30$ on each iteration.
For a uniform sampler, the probability of a positive instance in any single
random draw is
approximately $27/27000 \approx 0.001$. The chance of not obtaining at least
1 positive instance in a sample of $30$ is $\approx 1-(0.999)^{30} = 0.03$.
It is unclear if this applies directly to language models, which have
access to significant amounts of prior chess knowledge.
}

\begin{table}[!htb]
    \centering
   
    \begin{tabular}{|c|c|c|c|c|}
        \hline
        \multirow{2}{*}{Iteration ($i$)} & \multicolumn{4}{|c|}{$|M_i|$} \\
          & \multicolumn{2}{c|}{Without Symbolic} & \multicolumn{2}{c|}{With Symbolic} \\ \cline{2-5}
          & 0-shot  & 5-shot & 0-shot & 5-shot \\ \hline
        1 & 0 & 8   & 17 & 23 \\
        2 & 0 & 23  & 30 & 30 \\
        3 & 0 & 25  & 30 & 30 \\
        4 & 0 & 25  & 30 & 30 \\
        5 & 0 & 27  & 30 & 30 \\
        \hline
    \end{tabular}
    \caption{Instances of WFW generated on each iteration of \Gen.
   ``Without Symbolic'' represents the baseline of the LLM generating instances
    without any symbolic theory as part of the initial context or for verification
    (that is, $H = \emptyset$ in \Gen). ``With Symbolic'' provides the WFW theory in Fig.\ref{fig:gcws}(b). ``0-shot'' means no examples are provided in $E$ (and
    therefore are not part of the initial context), and ``5-shot'' means 5 WFR
    positions are provided in $E$.}

    \label{fig:chess_res}
\end{table}

What may surprise the reader, however, is that the generator appears to have
generated 3 more instances -- 30, instead of 27 -- consistent with the symbolic theory.
How is this possible, given that the symbolic description is stated in \citep{bain:gcws}
as a complete and correct recogniser for WFW. Closer study of the
description in \citep{bain:gcws} reveals that the symbolic description
is only intended to be a complete and correct description of \textit{legal}
positions. The 3 additional positions generated are in fact all illegal
positions. Thus, the neural generator
has generated unexpected instances that are  consistent with the theory. This
can be taken in 2 ways:
(a) the verification step is only as good as the
theory it is being verified against; and
(b) the SNG can identify unanticipated instances that are
consistent with the symbolic theory. This second aspect can be
seen as a positive feature of the approach, especially in real-world problems
such as the ones we consider later in the paper.

The reader could question the utility of the  symbolic theory, on grounds that the 
LLM's performs well enough simply given a few initial examples (without symbolic theory,
5-shot). For this well-known synthetic dataset, this is indeed the case. However, even here
we have still not generated all WFW instances after 5 iterations. For problems where
the data instances are rare, missing even a few makes a difference (as in the real-world
problems we consider next).

\section{Application: SNGs for Lead Discovery in Drug-Design}
\label{sec:appl}

We now turn to the real-world problem of generating small
molecules subject to constraints imposed by a protein-target.
For this, we will implement a \textit{sub-processing} SNG (\citep{hilario2013overview})
which interleaves neural
generation within the search for a symbolic hypothesis.\footnote{
 Unlike the chess problem, we no longer have a symbolic hypothesis.
}
For reasons
of space, we do not provide a review of the role of small-molecule
identification for early-stage drug design, and refer the reader to 
the literature (see, for example,~\citep{doytchnova:drug}). More
specifically, and closely related to SNGs is the work in
\citep{ross:breast}. There, an LLM is used both
to generate hypotheses and small molecules, and a robot-scientist
is used to test the molecules proposed.\footnote{We conjecture that the 
use of
the symbolic learner, as is done here, may result in more reliable hypotheses
and require fewer experiments to be conducted by the robot.}

The starting point of any SNG is the
the identification of the base poset $(\mathcal H,\ge_{\mathcal H})$.
Here, we informally describe the hypothesis space for describing small
molecules: a formal treatment
is in Appendix \ref{app:alg}. Simply put, hypotheses for the
problems considered here will encode interval-constraints on properties -- which we will call \textit{factors} -- of molecules. An assignment of interval-values to factors will be
called an \textit{experiment}. Factors together with an experiment will
be called a \textit{factor-specification}. Hypotheses encode factor-specifications.

\begin{myexample}[Factors, Experiments, Hypotheses]
\label{ex:factors}
Consider the factor-specification:\\ $((\mathit{MolWt}$, $\mathit{SynthSteps},\mathit{Affinity}), ([200,800], [4,8],[8,10]))$
specifies we have 3 factors -- molecular weight, number of synthesis steps, and
estimated binding affinity to the target -- and an experiment that assigns
$\mathit{MolWt}$ to the interval $[200,800]$, $\mathit{SynthSteps}$ to
the interval $[4,8]$ and $\mathit{Affinity}$ to the
interval $[8,10]$. Each such experiment is treated
as a hypothesis -- which we denote $\mathit{Hypothesis}((f_1,\ldots,f_n),(i_1,\ldots,i_n))$ -- and
represented as a clause. Thus
$\mathit{Hypothesis}$ $((\mathit{MolWt},\mathit{SynthSteps,}\mathit{Affinity}),([200,800],[4,8],[8,10]))$ is the clause
(here, ${\cal U}$ is the set of small molecules).
\begin{center}
{\footnotesize
\begin{tabbing}
\hspace*{2.5cm}\=P(x) $\leftarrow$ \= \kill
\>$\forall x \in \mathcal{U}~\Sigma(x) \leftarrow$ \\
\>\> $\mathit{MolWt}(x) \in [200,800] ~\wedge$ \\
\>\> $\mathit{SynthSteps}(x) \in [4,8] ~\wedge $ \\
\>\> $\mathit{Affinity}(x) \in [8,10]$
\end{tabbing}
}
\end{center} 
\end{myexample}
\noindent 
Hypotheses are thus simple axis-parallel hyper-rectangles
(akin to a categorical version of the probabilistic
box-embedding introduced by \cite{vilnis2018probabilistic}: a probability
estimate will be added later).
We will assume that background knowledge $B$ will specify how to
compute the values of the factors.\footnote{Some computations, like
$\mathit{Affinity}(\cdot)$, can be quite involved, requiring knowledge of
the target-site, and the use of a molecular modelling program for estimating
binding affinity of the molecule to this site.}

We note the hypothesis
specifies constraints on the factor-values for acceptable
molecules.
In practice domain experts identify a small set of chemically
meaningful factors $n$. The cost of \GM is dominated by the sampling budget.
The search constructs $n$-dimensional bounding hyper-rectangle rather
than examining $2^n$ subsets of factors (and corresponding hyper-rectangles). 
The greedy search procedure thus constructs at most $s \times k$
hyper-rectangles.

Identifying a suitable SNG requires searching through potential $(H,X)$ pairs
(here $X$ will be a set of molecules within $\operatorname{ext}(H|B)$).
An informed search naturally requires the specification of a score
for $(H,X)$ pairs. We use the following scoring function.

\begin{mydefinition}[Scoring $(H,X)$ pairs]
\label{def:probext}

Let $\mathcal H$ be a hypothesis class, $B$ background knowledge, and
$E$ denote initial data.
Given a $H\in\mathcal H$ and an associated support-set
$X\subseteq N(H) \cap\operatorname{ext}(H\mid B)$. Let 
$w(H,X) \in [0,1]$ be an empirical weight summarising the alignment of
$X$ with $\operatorname{ext}(H\mid B)$,
and $P(H\mid E,B)$ be the posterior  probability
of the hypothesis $H$. Let $P(H \mid E, B)$ be approximated by the
Bayesian score $Q(H,E,B)$ which is a closed-form
estimator of $P(H|E,B)$ (see \citet{mccreath1998lime}
and Appendix \ref{app:Qscore}).
We associate the following combined weight with $(H,X)$:
\[
W(H,X)\;=\;Q(H,E,B)\,\cdot\,w(H,X).
\]
\end{mydefinition}

In our experiments we use Procedure \ref{alg:genmol}, which
 performs a greedy search for an $(H,X)$ pair that maximises a
 slightly simplified form of $W(H,X)$.\footnote{
 $w(H,X)$ is replaced by an indicator function that is $1$ if
 $w(H,X) > 0$. In effect, this rejects hypotheses for which the neural
 generator is unable to generate consistent instances. From the
 rest, the search returns the hypothesis with highest
 Q-score, and therefore highest posterior probability.}

\begin{algorithm}[!htb]
    \caption{\GM: implementing a greedy search over the space of 
        symbolic descriptions and consistent sets of generated molecules}
    \label{alg:genmol}
    \textbf{Input}:
        $L$: an LLM;
        $E$: a  sample of labelled
              and unlabelled molecules; 
        $B$: background knowledge, that includes
        $(F,\mathbf{\Theta})$: a factor-specification with
            $F = (f_1,\ldots,f_n)$ and
            $\mathbf{\Theta} = ([\theta_1^-,\theta_1^+],\ldots,[\theta_n^-,\theta_n^+])$;
        $s$: an upper-bound on the number of samples in the search; 
        $k$: an upper-bound on the number of steps in the search;
        $l$: an upper-bound on the number of LLM iterations;
        $m$: an upper-bound on the number of samples to be drawn by the LLM' and
        $\theta$: a lower bound on the weight of an acceptable hypothesis\\
    \textbf{Output}: $(H,X) \in \mathcal F$ where
        $H \in (\mathcal H,\ge_{\mathcal H})$
        is a symbolic hypothesis; and
        $X \in F(H)$ a set of molecules.

    \begin{algorithmic}[1]
    
    \STATE $\mathbf{e}_0 = \mathbf{\Theta}$\;
    \STATE $H_0 = Hypothesis(F,\mathbf{e}_0)$\;
    \STATE $(w,M_0)$ = \Gen$(L,B,H_0,l,m)$\;  {\gray{//generate molecules in outermost hyper-rectangle}}
    \STATE $q_0 = Q(H_0,E,B)$\;   {\gray{//see Appendix \ref{app:Qscore} for Bayesian score}}
    \STATE $w_0 = q_0 \times \mathbf{1}(w > 0)$\;
    \STATE $i = 1$\;
    \STATE $Done = ((w_0 < \theta) \vee (i > k))$\;
    \WHILE{$\neg Done$}
        \STATE Let $E_i$ be a random sample of $s$ interval-vectors properly subsumed by $\mathbf{e}_{i-1}$\label{step:esample}\;
        \STATE Let $S = \{(w',\mathbf{e},H,M): \mathbf{e} \in E_{k},~ H = Hypothesis(F,\mathbf{e}),~ q = Q(H,E,B), (w,M) =$ \Gen$(L,E,B,H,l,m), w' = q \times {\mathbf 1}(w > 0)\}$\; \label{step:estimate}
        \STATE Let $(W_i,\mathbf{e}_i,H_i,M_i) = \underset{(w,\mathbf{e},H,M) \in S}{\text{argmax}} ~w$\;\label{step:select}
        \STATE $Done = ((w_i < \theta) \vee (W_i < W_{i-1}))$\;
        \STATE $i := i + 1$\;
    \ENDWHILE
    \RETURN $(H_{i-1},M_{i-1})$
    \end{algorithmic}
\end{algorithm}

\noindent
We note that \GM interleaves a symbolic learner
with a neural reasoner \Gen. It therefore
exemplifies the \textit{sub-processing} form of integration
identified in \citep{hilario2013overview}.
We clarify the working of two aspects of the search by example.
First, we highlight the hypotheses considered:

\begin{myexample}[Nested Rectangles]
Suppose \GM is attempting to find a hypothesis given 2 factors
$F = (\mathit{Affinity},\mathit{SynthesisSteps})$, with $\Theta = \mathbf{e}_0 = ([5,10],[4,8])$. Then
\GM starts with the constraint
$(\mathit{Affinity} \in [5,10]) \wedge (SynthesisSteps \in [4,8])$, which
corresponds to a rectangle $R_0$ in the Cartesian-space with $\mathit{Affinity}$
and $SynthesisSteps$. Let the $Q$-value of the corresponding hypothesis be $q_0$.
\SH then randomly samples rectangles contained within $R_0$. Suppose the
rectangle $R_1$, defined by  $\mathbf{e}_1 = ([6,10],[4,6])$, and the corresponding hypothesis
has the highest $Q$-value ($q_1$)  of all the rectangles sampled. \SH then iterates
by sampling within $R_1$. The search procedure therefore identifies a sequence of
nested rectangles.   
\end{myexample}

Secondly, we note that \GM is a randomised procedure. Sampling is done in Step \ref{step:esample}
of \GM. Here there are several options: the easiest to sample each
dimension of the bounding hyper-rectangle
independently, using a uniform distribution. Better sampling procedures exist
(for example, Latin Hyper-rectangle Sampling \citep{mckay2000comparison}, DIRECT \citep{jones2001direct},
Bayesian sub-region sampling \citep{skilling2004nested} and so on). Alternatives to the simple greedy strategy of picking the best-scoring hyper-rectangle in Step~\ref{step:select} 
also clearly possible, by drawing an experiment from the distribution of $w$-scores.

\begin{myexample}[Sampling (Hyper-)Rectangles]
Suppose \GM is attempting to sample a rectangle bounded by $[x_1,x_2]$ and $[y_1,y_2]$. 
Examples of some strategies for sampling sub-rectangles are:

\noindent
{\bf{Uniform orthogonal sampling.}}
(a) Select a pair of points $a = U(x_1,x_2)$ and $b = U(x_1,x_2)$ s.t. $x_1 < a < b < x_2$;
(b) Select a pair of points $c = U(y_1,y_2)$ and $d = U(y_1,y_2)$ s.t. $y_1 < c < d < y_2$; and
(c) The new rectangle is bounded by $[a,b]$ and $[c,d]$.

\noindent
{\bf{Sampling with fixed upper- or lower-bounds.}}
(a) Select a point $a = U(x_1.x_2)$ s.t. $x_1 < a < x_2$;
(b) Select a  point $d = U(y_1,y_2)$ s.t. $y_1 < d < y_2$; and
(c) The new rectangle is bounded by $[a,x_2]$ and $[y_1,d]$. 
\end{myexample}

We treat the choice of sampling method
as an application-specific detail. The
following properties will, however, hold for Procedure \GM.
The entire randomised search over hyper-rectangles can
be performed by an ILP engine like Aleph, with a user-specified
\emph{refinement operator} that returns samples of sub-rectangles,
and a user-specified scoring function. The symbolic hypothesis
construction within \GM can thus be thought
of as being performed by a specialised, more efficient, ILP learner.

\subsubsection*{Properties of \GM}
\noindent
Irrespective of the sampling strategy used, the following proposition
holds for \GM and the proof is
given in the appendix \ref{app:proofcmimpl}:

\begin{myproposition}
\label{prop:cmimpl}
Let $(F,\cdot)$ be a factor-specification, $B$ denote background knowledge.
    Let $\mathbf{e}_{i}$ ($1 \leq i \leq k$) be an experiment selected by
     \GM on the $i^\text{th}$ iteration s.t. $\mathbf{e}_i$ is contained by $\mathbf{e}_{i-1}$.
Let $H_i = Hypothesis(F,\mathbf{e}_i)$, and $H_{i-1} = Hypothesis(F,\mathbf{e}_{i-1})$.
Then $H_{i-1} \models H_i$.
\end{myproposition}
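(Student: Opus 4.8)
The plan is to reduce the claim to the elementary fact that strengthening the body of a definite clause produces a logical consequence of it. First I would unpack the notation. By the construction preceding the proposition, $H_{i-1} = \mathit{Hypothesis}(F,\mathbf{e}_{i-1})$ and $H_i = \mathit{Hypothesis}(F,\mathbf{e}_i)$ are the clauses
\[
\Sigma(x) \leftarrow \bigwedge_{j=1}^n \bigl(f_j(x) \in [e_{i-1,j}^-, e_{i-1,j}^+]\bigr)
\quad\text{and}\quad
\Sigma(x) \leftarrow \bigwedge_{j=1}^n \bigl(f_j(x) \in [e_{i,j}^-, e_{i,j}^+]\bigr),
\]
with identical head $\Sigma(x)$ and bodies differing only in the interval bounds. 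The assumption that $\mathbf{e}_i$ is contained by $\mathbf{e}_{i-1}$ (guaranteed by the sampling in Step~\ref{step:esample}, which draws only interval-vectors properly subsumed by $\mathbf{e}_{i-1}$) means precisely that $[e_{i,j}^-, e_{i,j}^+] \subseteq [e_{i-1,j}^-, e_{i-1,j}^+]$ for every factor $f_j$.

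The core step is to observe that the body of $H_i$ implies the body of $H_{i-1}$. For each $j$, interval inclusion gives $e_{i-1,j}^- \le e_{i,j}^-$ and $e_{i,j}^+ \le e_{i-1,j}^+$, so by transitivity of $\le$ on the factor values any $x$ with $f_j(x) \in [e_{i,j}^-, e_{i,j}^+]$ also has $f_j(x) \in [e_{i-1,j}^-, e_{i-1,j}^+]$. Taking the conjunction over $j$ shows the body of $H_i$ implies the body of $H_{i-1}$. I would then conclude by antecedent weakening: let $M$ be any model of $B \wedge H_{i-1}$ and let $x \in \mathcal{U}$ satisfy the body of $H_i$ in $M$; then $x$ satisfies the body of $H_{i-1}$, so the clause $H_{i-1}$ yields $\Sigma(x)$ in $M$, whence $M$ satisfies $H_i$. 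As $M$ was arbitrary, $B \wedge H_{i-1} \models H_i$.

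I do not expect a genuine obstacle here; the only points needing care are bookkeeping. The first is getting the direction of entailment right: the \emph{larger} rectangle $\mathbf{e}_{i-1}$ gives the \emph{weaker} body and hence the \emph{more general} hypothesis, which is exactly what entails the more specific $H_i$ -- matching the general-to-specific description of \GM and, via Defn.~\ref{def:pohyp}, the base-poset relation $H_{i-1} \ge_{\mathcal H} H_i$. The second is the role of background knowledge: the implication $f_j(x) \in [e_{i,j}^-, e_{i,j}^+] \Rightarrow f_j(x) \in [e_{i-1,j}^-, e_{i-1,j}^+]$ rests on the ordering axioms for the factor codomain (transitivity of $\le$), which I would treat as part of $B$, making the explicit form of the conclusion $B \wedge H_{i-1} \models H_i$. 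Finally, I would note that the argument is purely syntactic in the interval bounds and therefore holds irrespective of the sampling strategy used in Step~\ref{step:esample}, as the proposition asserts.
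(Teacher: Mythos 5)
Your proposal is correct and takes essentially the same route as the paper's proof: both unpack the two clauses as sharing the head $\Sigma(x)$, derive from the interval containment that the body of $H_i$ implies the body of $H_{i-1}$, and then conclude by reasoning on the antecedents. The only cosmetic difference is that the paper casts the final step as a proof by contradiction, while you argue directly over an arbitrary model (and are slightly more careful in flagging the role of $B$ for the interval-ordering facts); the two are interchangeable.
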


\noindent
It is straightforward
to see that the hypotheses examined by \GM are from the base poset $({\cal H},\geq_{\cal H})$.

\begin{myremark}[$H_{i-1} \geq_{\cal H} H_i$]
\label{rem:searchorder}
Let $H_{i-1}, H_i \in {\cal H}$ be hypotheses constructed by \GM on iterations $i-1, i$.
Let $\geq_{\cal H}$ be as defined in Defn.~\ref{def:pohyp}.
Then $H_{i-1} \geq_{\cal H} H_i$.
This follows from Prop. \ref{prop:cmimpl}. If $H_{i-1} \models H_i$ then
$B \wedge H_{i-1} \models B \wedge H_i$. It follows that
$\operatorname{ext}(H_{i-1}|B) \supseteq \operatorname{ext}(H_i|B)$, and $H_{i-1} \geq_{\cal H} H_i$.
\end{myremark}

\begin{myremark}[\textbf{\GM implements an $SNG$}]
\label{rem:sng}
We note first that the symbolic hypotheses examined by \GM 
    are from ${\cal H}$.
    Let $(H_i,M_i)$ be the hypothesis, generated set of molecules and
    weight in \GM
    on iteration $i$.
    By construction $M_i$ is obtained using \Gen with hypothesis $H_i \in {\cal H}$.
    By Prop.~\ref{prop:gencorrect}
    $M_i \subseteq \operatorname{ext}(H_i|B)$.
    $(H_i,M_i) \in {\cal F}$ and
    \GM can be taken to be an implementation of an SNG function defined in
    Defn.~\ref{def:sng} (with some additional bounds on sample sizes, \textit{etc\/}.)
\end{myremark}

\subsubsection*{Limitations of \GM} 
We draw the reader's attention to the following limitations of
\GM:
\begin{itemize}
    \item \GM uses the base-poset ordering $\geq_{\mathcal H}$ to
    search the hypothesis space and a separate sampling step to populate
    each $X$. It does not maintain a joint best-first ordering over
    $(H,X)$ pairs; consequently the search is greedy with respect to
    hypothesis quality and does not back-track on the choice of $X$
    given a fixed $H$.
    
\item \GM assigns a 0 or 1 value for generator efficiency in Step \ref{step:estimate}        
    (through the use of the indicator function $\mathbf 1(\cdot)$). This results in
    over-zealous pruning of the hypothesis space. A better estimate would
    be to use a measure based on the proportion $Q \times w$ as described
    in Sec.\ref{sec:sem}.
\item The search-strategy performs a greedy selection of a single
    element in Step \ref{step:select}. Again, while this reduces the size of the search-space,
    It is likely that a best-first strategy employing
    a priority-queue would be an improvement.
\item \GM inherits the inefficiency of \Gen, which adopts rejection-sampling
    to ensure that data instances returned satisfy the the symbolic hypothesis.
\end{itemize}

\subsection{Case Study 1: Well-Understood Targets with Many Ligands}
\label{sec:case1}

We evaluate the performance of SNG in the controlled setting
examined in \citep{brahmavar2024generating}, with a known target-site, a
large number of known inhibitors and non-inhibitors,
and a single factor to be optimised (estimated binding affinity to the
target-site). 

\subsubsection*{Problems}
{\bf{Kinase Inhibitors.}} We conduct our controlled
evaluations on 2 well-studied kinase inhibitors: (a) JAK2, with 4100 molecules
provided with labels (3700 active); and (b) DRD2 (4070 molecules
with labels, of which 3670 are active). These datasets are from the ChEMBL database \citep{gaulton2012chembl}, which are selected based on their $IC_{50}$ values and docking scores.
JAK2 inhibitors are drugs that inhibit the activity of the Janus kinase 2 (JAK2) enzyme, in
turn affecting signalling pathways, especially to the cell nucleus.
These pathways are critical for various immune response reactions and are used
to develop drugs for autoimmune disorders like ulcerative colitis and rheumatoid arthritis.
DRD2 (dopamine D2) inhibitors are drugs that block dopamine's ability to activate the
DRD2 receptors. This reduces dopamine signalling, and is used to treat psychological disorders
like schizophrenia.\\

\subsubsection*{Background Knowledge}

We distinguish the following components of background
knowledge:
\begin{enumerate}[(A)]
    \item Specialists' knowledge. We require biological knowledge
        of the target site, or a proxy for the target size. 
        We also need chemical knowledge of the relevant
        factors, the range of their values, and information (if any) of whether the
        factors need to be maximised or minimised. For this case
        study, we will use only 1 factor (estimated binding affinity).
    \item Factor-functions. These are definitions for
        computing the factors (like \textit{Affinity},
        \textit{MolWt} {\it {etc}\/}.) for molecules. Usually
        this will also include procedures provided by some
        molecular modelling software (like RDKit).
\end{enumerate}

\subsubsection*{Algorithms and Machines}
All the experiments are conducted using a Linux (Ubuntu) based workstation with
64 GB of main memory, a 12-core (dual) AMD Ryzen Threadripper processor, and
NVIDIA RTX 4500 Ada Generation GPU (24 GB). 
All the implementations are in Python3. We use \texttt{OpenAI} library (version 1.52.1)
for sampling molecules from \texttt{GPT-4o} \citep{achiam2023gpt}, with temperature $0.7$. 
We use \texttt{RDKit} (version 2024.09.3) \citep{bento2020open} for computing
molecular properties and \texttt{GNINA} (version 1.3) \citep{mcnutt2021gnina}
for computing docking scores (binding affinities) of molecules.
Additional details of the experimental setup can be found in Appendix~\ref{app:expt}. We have used PubChem Sketcher V2.4 for drawing the
2-D structures of the molecules shown in this paper.

\subsubsection*{Experimental Method}

Our method for experiments is straightforward and follows these steps:

\begin{enumerate}
    \item Identify factor-set $F$ and other bounds.
    \item Obtain data instances consisting of positive, negative and unlabelled examples.
        Any positive example is taken to be feasible and any negative example is taken
        to be infeasible.
    \item Using the background knowledge $B$ described in Sec.~\ref{sec:case1}, $D$, $F$
            and other bounds:
        \begin{enumerate}
            \item Obtain a set of molecules using \GM;
            \item Assess the quality of the molecules generated.
        \end{enumerate}
\end{enumerate}

\noindent
We refer the reader to Appendix \ref{app:expt} for additional details related
to the method.

\subsubsection*{Results}

Table~\ref{tab:valresults1} shows a comparison of SNG against the
results tabulated for LMLF++ in \citep{brahmavar2024generating}.
LMLF++ was the best performing variant in that paper, and was
substantially better than previous benchmarks set by the use
of a VAE-GNN combination and reinforcement-learning based methods.
It is evident from the tabulation that SNG performs
at least as well as LMLF++.\footnote{
\blue{Neural Markov Logic Networks~\citep{marra2021neural} are a neurosymbolic relational generative model demonstrated on molecule generation, and would be a natural point of comparison. We attempted this using the authors' official implementation. The reported NMLN molecule experiments used $1073$ molecules restricted to exactly $8$ heavy atoms, whereas our molecules are drug-like ($13$ to $53$ heavy atoms, median $31$). The NMLN formulation grounds all predicates over a fixed domain of $N$ constants per molecule, and the unary one-of-$N$ constraint requires exactly $N$ heavy atoms per training example; setting $N=32$ retained only about $350$ of about $4000$ molecules. 
Even with reduced potential arity ($k=3$), training on a $24$\,GB GPU completed only ${\sim}3000$ of the default $200{,}000$ iterations after $48$ hours of GPU-saturated runtime, well short of convergence. We therefore do not report a direct comparison: NMLN was developed and evaluated in a small-fragment regime, and scaling it to drug-like sizes lies outside the scope of the current work.}}

It is also helpful if a
lead generator proposes novel molecules. For the
JAK problems, this is not easy, since the number of
known inhibitors is very large. Nevertheless, Table~\ref{tab:valresults2} 
suggests that the molecules generated by \GM may still
be quite novel (this is due to the prompt used for the LLM that
attempts to generate molecules not in any known chemical
database). 
\blue{The Tanimoto coefficient ($TC$) is a widely used
fingerprint-based similarity between two molecules, ranging from 0
(no shared bit-pattern) to 1 (identical fingerprints); a threshold of
$TC \ge 0.4$ is commonly considered ``similar'' in
cheminformatics. The values in Table~\ref{tab:valresults2} (0.13--0.14)
therefore indicate that the molecules generated are structurally quite
distinct from the known inhibitors, which is a desirable property for a
lead-discovery tool.}

\blue{%
\begin{table}[!htb]
\centering
{\small
\begin{tabular}{|c|c|c|c|c|}
\hline
Problem & Known Inhibitors & LMLF++      & VAE-GNN     & \GM (Ours)  \\ \hline
JAK2    & 7.26 (0.64)      & 7.74 (0.30) & 6.53 (1.18) & 8.59 (0.23) \\ \hline
DRD2    & 6.86 (0.83)      & 7.66 (0.29) & -           & 7.53 (0.47) \\ \hline
\end{tabular}}
\caption{Statistics of binding affinities (the higher the better)
    for molecules obtained from
    \GM on benchmark datasets. The entries represent the mean values,
    with standard deviations shown in parentheses. We compare against
    recent results using LMLF++ \citep{brahmavar2024generating} and
    prior results using a VAE-GNN model \citep{dash2021using}.}
\label{tab:valresults1}
\end{table}%
}

\blue{%
\begin{table}[!htb]
\centering
{\small
\begin{tabular}{|c|c|}
\hline
Problem & TC         \\ \hline
{JAK2} & 0.14 (0.032) \\ 
{DRD2} & 0.13 (0.034) \\  \hline
\end{tabular}}
\caption{Potential novelty of LLM-generated molecules using {\tt GPT-4o}. 
$TC$ represents the Tanimoto coefficient, ranging from 0 to 1, where 1 signifies high similarity to known inhibitors, while 0 indicates complete structural dissimilarity.
The entries represent the mean values, with standard deviations shown in parentheses.}
\label{tab:valresults2}
\end{table}%
}

\subsection{Case Study 2: Less Understood Target with Few Ligands}
\label{sec:case2}

We evaluate the performance of \GM in an open-ended setting where the
true target-site is not known precisely, and multiple factors have to be
optimised (binding affinity, molecular weight and synthesis accessibility),
and there are very few known inhibitors.
%
Note: Background Knowledge; Algorithms and Machines; and Methods are the same as
for Case Study 1 (Section~\ref{sec:case1}).

\subsubsection*{Problem}

 \noindent
{\bf DBH Inhibitors.} Human dopamine $\beta$-hydroxylase (DBH) is an enzyme that converts
dopamine (DA) to norepinephrine (NA) and plays a pivotal role in regulating the
concentration of NA, deficiency or overproduction of which causes several diseases related to the brain and the heart. This enzyme is thus of high therapeutic significance. The availability of the three-dimensional structure of DBH is expected to facilitate the identification of DBH
active-site inhibitors. In the meantime, the crystal structure of a dimer of DBH has been determined, providing insights into its function and aiding in the design of inhibitors \citep{vendelboe2016crystal}.
Specifically, we will use the \textit{in silico} model of the dimer to generate small molecules
with similar or better IC50 and KD values (in simulation) than at least one of the latest
generation of DBH inhibitors. We will use as data the 5 known DBH inhibitors: Tropolone, Disulfiram,
Nepicastat, Zamicastat, Etamicastat. The last three are shown in Fig.~\ref{fig:dbh_data}.
Tropolone is a naturally occurring
molecule, with known toxic effects. Disulfiram is a 1st generation molecule, and
also with toxic side-effects.  The last two molecules,
Zamicastat and Etamicastat are
the latest generation of DBH inhibitors and are currently in double-blind human
trials for hypertension. We focus on obtaining molecules with docking scores at
least as good as Nepicastat, a 4th generation drug.


\begin{figure}[!htb]
    \centering
    \includegraphics[width=0.9\textwidth]{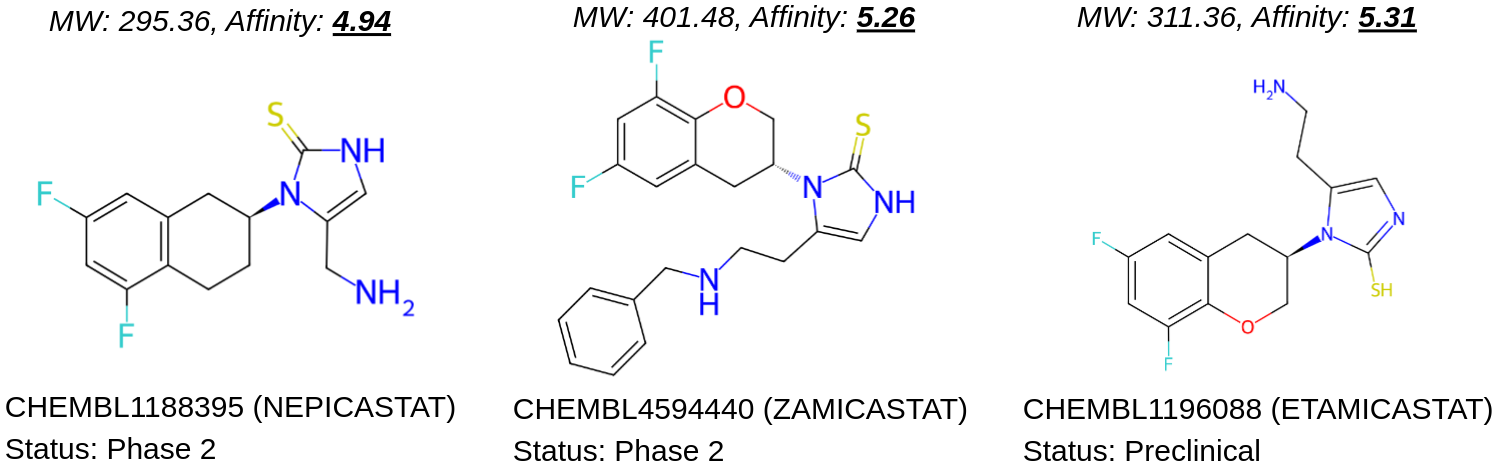}
    \caption{Three known DBH inhibitors at different stages of their FDA approval status. 
    Each compound is identified by their CHEMBL ID and name.
    `MW' refers to molecular weight;
    `Affinity' refers to the binding affinity predicted by
    {\tt GNINA} software while docking the molecules to the
    DBH protein, {\it 4zel}.
    The approval status of these compounds were noted 
    as on 25 January 2025 from the ChEMBL online portal.}
    \label{fig:dbh_data}
\end{figure}

 \subsubsection*{Results}
The exploratory problem concerns generating potential leads for
DBH inhibition, given data on the structure and inhibitory values of
5 molecules on a proxy target to DBH. The inhibitory efficacy of
a small number of molecules is known (see description of the problem above). We consider two
kinds of exploratory experiments. First, the LLM is provided with the
information about the known molecules and their structure: in LLM
parlance, we are doing ``few-shot learning'' (correctly, we are using
the LLM to draw from a distribution conditioned on the known molecules).
We will call this ``In-the-Box'' exploration. Secondly,
we do not give the LLM any information about known molecules (``zero-shot learning'',
or ``Out-of-the-Box'' exploration).  The top-5 molecules obtained for each
kind of experiment is shown in Fig.~\ref{fig:expresults1}.\\[2pt]

\begin{figure}[!htb]
    \centering
    \includegraphics[width=\textwidth]{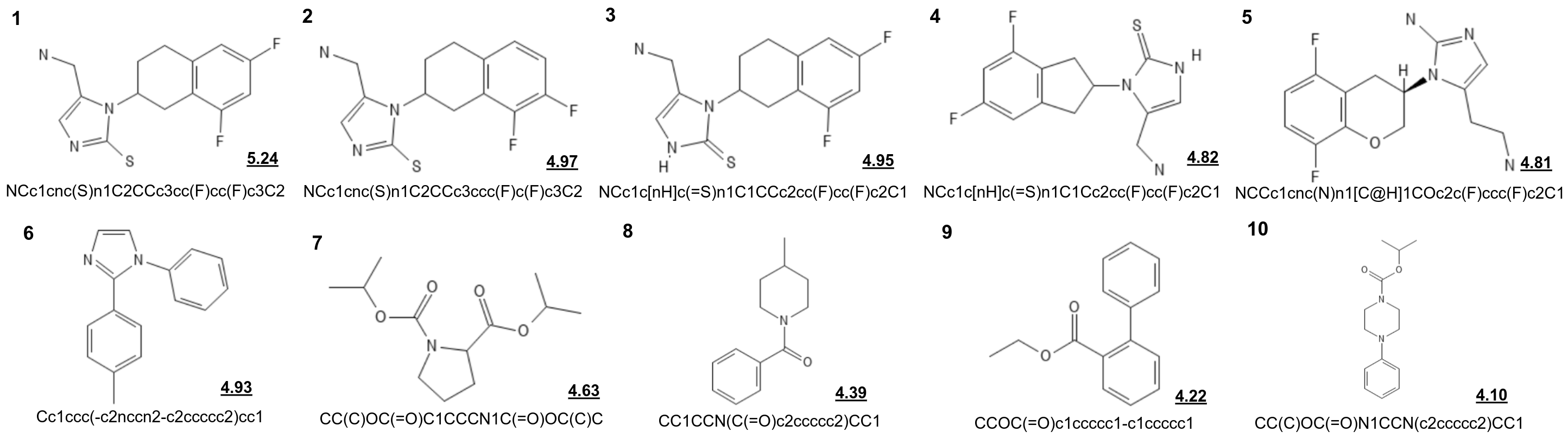}
    \caption{Potential inhibitors for DBH proposed by \GM, along with their
    binding score to \textit{4zel} protein.
    Molecules 1--5 are the top-5 molecules (ordered by estimated affinity)
    from  ``In-the-Box'' exploration. Molecules 6--10 are from ``Out-of-the-box''
    exploration. In the former the LLM used by \GM has few-shot examples when it
    starts. In the later, no such information is provided, and LLM uses its
    underlying distribution over molecules are used to generate the molecules.}
    \label{fig:expresults1}
\end{figure}

\noindent
But are the molecules any good? Here are some assessments by specialists\footnote{Dr.~Dey is an expert in structural biology and drug discovery, specialising in the molecular modeling and cryo-EM analysis of membrane proteins to target cardiovascular and infectious diseases.\\
Dr.~Banerjee is an expert of synthetic organic chemistry. His research interests include  medicinal chemistry, drug discovery, green chemistry, supramolecular chemistry, and the development of
molecular diagnostics tools.}:

\begin{itemize}
\item[] \textbf{``In-the-Box'' Exploration.} The views of the specialists about
        Molecules 1--5 are as follows:
        \begin{description}
            \item[Structural Biologist.]
            \underline{Molecules 1--4}. These are (likely to be) good                   
                inhibitors of DBH since these molecules are structurally similar to dopamine and nepicastat.  Another good feature of these  molecules is that  it carries Fluorenes as well.
        \underline{Molecule 5}. Could be a better inhibitor of DBH since it is
            structurally similar to dopamine and nepicastat. It may work. It could be better than the four other molecules since it carries one oxygen
            in the aromatic ring.  Another good feature of this molecule is that it also carries Fluorenes.
        \item[Synthetic Chemist.] All molecules can be synthesized, but is
            likely to be a long synthesis, and costs will be high.
    \end{description}

\noindent
\item[] \textbf{``Out-the-Box'' Exploration.} The views of the specialists about
        Molecules 6--10 are as follows:
        \begin{description}
            \item[Structural Biologist.] 
            \underline{Molecule 6}. This is a novel inhibitor of DBH where halogen is absent, 
            OH/O is also absent. Not similar to dopamine or nepicastat.
            Thus, it can be an allosteric inhibitor of DBH.
            \underline{Molecules 7,8}. This is also a novel inhibitor of DBH where
            halogen is absent. Since O is present twice, the affinity might be
            tighter. Aromatic rings are very dissimilar to dopamine or nepicastat.
            Thus, mode of inhibition is difficult to predict.
            \underline{Molecule 9}. May not be a good inhibitor due to highly flexible sidechains.
            \underline{Molecule 10}. This can be a better inhibitor than the
                first one since its aromatic rings are very interesting to bind in the active site of DBH. O= is present and thus the affinity might be tighter. Mode of inhibition should be competitive.
    \item[Synthetic Chemist.] All molecules can be synthesised, and synthesis
            is likely to be through a short route. The molecules may be commercially
            available.
    \end{description}
\end{itemize}

\noindent
The broad takeaways are these: (a) Unsurprisingly, In-the-Box exploration appears
to yield molecules that are similar (but not the same) as existing inhibitors.
In contrast, it is very interesting that Out-of-the-Box exploration appears to
yield very different molecules to known inhibitors;
(b) There are good biological reasons to expect Molecules 1--5 to bind to the
target, and Molecules 6,7,8 and 10 to bind to the target. Of these, the biologist
believes Molecules 6 and 10 to be especially interesting; and (c) On the synthesis
side, all molecules appear to be synthesisable, but 6--10 appear to be
more amenable to a short (and therefore, possibly cheaper) route.


\subsection{Additional Experimental Observations}
\label{sec:resultsextra}
We now consider some questions that are of relevance to both case studies,
and of wider interest to the uses of SNG:

\vspace*{0.2cm}

\noindent
\textbf{Is symbolic learning useful?}
The motivation for SNG was that for certain kinds of problems, it is
beneficial for symbolic learning to (a) ``focus'' the neural generator; and
(b) act as a touchstone for human assessments of the generator. It is relevant
to assess whether these aspects are reflected in the case-studies presented.
Table~\ref{tab:symutility} shows the results from just using the LLM-based generator
without a symbolic theory to constrain its output. The results suggest
the symbolic hypothesis does appear to play a useful role. It is unclear to
us why the variance with the use of symbolic theories is higher (we would have
expected it to be lower).

\begin{table}[!htb]
    \centering
    {\small
    \begin{tabular}{|c|cc|}
    \hline
    \multirow{3}{*}{Problem} & \multicolumn{2}{c|}{Mean Affinity}             \\ \cline{2-3} 
     &
      \multicolumn{1}{c|}{\multirow{2}{*}{\begin{tabular}[c]{@{}c@{}}Without Symbolic\\ Learning\end{tabular}}} &
      \multirow{2}{*}{\begin{tabular}[c]{@{}c@{}}With Symbolic\\ Learning\end{tabular}} \\
                             & \multicolumn{1}{c|}{}            &             \\ \hline
    JAK2                     & \multicolumn{1}{c|}{4.53 (0.43)} & 7.72 (1.53) \\ \hline
    DRD2                     & \multicolumn{1}{c|}{5.40 (0.60)} & 7.40 (0.61) \\ \hline
    DBH                      & \multicolumn{1}{c|}{3.80 (0.37)} & 4.72 (0.30) \\ \hline
    \end{tabular}
    }
    \caption{Estimated binding affinities of molecules generated without and with
        symbolic learning.}
    \label{tab:symutility}
\end{table}%

It is noteworthy that biologists and chemists examine molecules based on the
symbolic description -- in particular constraints on molecular weight and binding affinity.
They are able to comment meaningfully on instances generated by the machine. These two
findings, although qualitative, are important, as they indicate that both symbolic and
neural descriptions are important and intelligible to human experts.
Additionally,
not included here for reasons of space, is a further round of `molecule-exchange'
between the chemist and the biologist, inspired by Molecules 1--5. The
chemist proposed edited versions with shorter synthesis steps, and the
biologist commented further on the biological suitability of those molecules.

A key feature claimed for symbolic learning is the ability
to generalise from small amounts of data. In \citep{muggleton2023deeplog},
small program-synthesis tasks are used to demonstrate how useful symbolic learning
is possible with even a single positive example. SNG 
inherit the same ability, since the learning is confined to the
symbolic component, as is demonstrated in the 
case study of DBH inhibitors, which contains just 5 known positive
examples. Further experimental evidence of this ability is provided
in tabulation below.




\begin{table}[!htb]
\centering
{\small{
\begin{tabular}{|c|c|c|c|} \hline
$|E^+|$ & JAK2 & DRD2  & DBH \\ \hline
0     & 7.72 (1.53) & 7.40 (0.61) & 4.72 (0.30)      \\ \hline
1      &  8.83 (0.21)   & 6.15 (0.47) & 5.16 (0.13)   \\ \hline
2      & 8.69 (--)    & 6.99 (0.31) & 4.64 (0.48)   \\ \hline
3      &  8.50 (0.01) & 7.15 (0.58) &  5.42 (0.30)  \\ \hline
4      & 7.32 (0.77) & 5.60 (0.79) & 5.12 (0.18)   \\ \hline
5      & 8.59 (0.23) & 7.53 (0.47) & 4.23 (0.41) \\ \hline
 \end{tabular}
 }}
\caption{SNG with very small datasets. The results show estimated
mean inhibition for the targets, with very small numbers of positive
examples (and no negative examples).}
\label{tab:lowdata}
\end{table}

Qualitatively we observe that as
$|E^+|$ shrinks the symbolic constraints become broader and the generated
molecules less target-specific, though the hypothesis-search remains
informative even with a single positive example. 

\vspace*{0.2cm}
\noindent
\textbf{Does the choice of LLM matter?}
We considered SNGs with 2 leading general-purpose LLMs: OpenAI's GPT-4o
and Anthropic's Claude 3.5 Sonnet \citep{anthropic2024claude35sonnet}.
Details of the experiments are in Sec.~\ref{app:results} of the Appendix.
The results show that neither LLM seems clearly better. It is also relevant to examine
if such large, general-purpose LLMs are needed at all.
We also examined the performance of SNG with a specialised, small language
model (see the Appendix for details). Specifically,  we implemented an SNG variant
using 87M parameter GPT-2 model~\citep{entropy_gpt2_zinc_87m},
The results show that general-purpose large models are better on the benchmark
    datasets of JAK2 and DRD2. But interestingly, for the DBH problem -- where the target is
    less understood and known inhibitors are fewer -- the smaller specialised model
    appears to perform better. This suggests the larger models may be performing
    better simply by virtue of having been exposed to the benchmark data during training.\\[2pt]

\noindent
\textbf{Are the results sensitive to initial conditions/parameter settings?}
We again refer the reader to Sec.~\ref{app:results} of
the Appendix for detailed tabulations on the effect of initial conditions
and parameter changes. We are specifically interested in the effect
of the initial context provided to the LLM, and values for 2 parameters:
the \textit{sample size} used by the symbolic
leaner and the LLM's \textit{temperature}. On the benchmark data, the results appear
unaffected by changes in the initial context. However, for the exploratory dataset,
providing few-shot instances in the initial context does make a difference to the
similarity to known data instances. On parameter settings, we find GPT-4o output
to be the most sensitive to
parameter changes. SNG with Claude and our specialised GPT-2 model
were stable across changes in sample size and temperature.

\section{Concluding Remarks}
\label{sec:concl}

 To date, the state of AI and ML has been in what Kuhn referred to as 
 a \textit{pre-paradigm} state, with two main competing  schools of thought -- symbolic and connectionist -- offering different procedures, results and metaphysical principles.
 However, with startling results using connectionist techniques, we may well be on the verge of consensus developing on a Kuhnian notion
 of ``normal science'', at least in the machine-learning aspects of AI. Even so,
 there still appears to be a useful role for the tools and techniques developed by the symbolic school, especially in the use of ML for problems requiring the
 inclusion of prior knowledge, logical reasoning, correctness guarantees,
 and human-understandable explanations. For a class of problems with these characteristics, we propose a neurosymbolic approach called symbolic
 neural generation, or SNG. SNG systems are characterised by the use of a
 machine-constructed symbolic description that constrain the generation of data by 
 neural generators. A useful analogy is offered by the
 roles of specification and implementation in formal methods for software development.
 The symbolic model in SNG acts as the specification and the neural generator as
 the implementation. As is done in formal methods, we ensure that any output true
 of the implementation -- instances generated -- is also true of the specification (satisfies
 the constraints of the symbolic model). Unlike with
classic formal methods, in SNG ML techniques play a role in obtaining both
the specification and the implementation. We have proposed using the
mathematical structure of fibered-posets by way of specifying the
codomain of an SNG system. The paired elements comprising this set
makes it a natural choice for hybrid neurosymbolic systems, where one
element of the pair is of symbolic and the other of neural origin.

 In the paper, a practical motivation for SNG is provided by the need to accelerate the
 identification of `leads' in early stage drug-design.  Leads are small
molecules capable of binding to a target protein, and satisfying some
physico-chemical constraints. The specific area where ML could help is
in cases where the structure of target site is not well understood, and
there are as yet very few small molecules that have been shown to be good
inhibitors. From a ML standpoint, this constitutes an ideal situation for
symbolic learning. We also want to be able to generate entire molecules that
can be examined critically by synthetic chemists and structural biologists. This
generation requires a complex probability model that can adequately represent
the molecular patterns of interest, and also efficient mechanisms for sampling from
the distribution. 
Neural generators -- especially those based on language models -- appear
to be especially well-suited for this. 
This is our motivation for developing SNG. We have provided results on benchmark problems that
show SNG to be comparable to state-of-the-art methods. 
However, it is the exploratory
study that we think is substantially more relevant to the area of early-stage drug-design. Specifically, it shows: (a) the use of very small numbers of data instances
(5, in this case); (b) some of the
results from the generator -- especially in the ``Out of the Box'' mode -- may
be biologically novel and can be synthesised.  While
the output has been made intelligible to the specialist through the
use of an LLM, controlling and verifying the LLM's output has been made possible
through the use of the symbolic model. 

There are several ways in which the work here can be improved and extended.
Immediately of relevance  are improvements to the
\GM algorithm. We have already listed a number of limitations of the procedure:
addressing each one will improve both the quality of solutions returned,
and the efficiency of the approach. Additional studies of the
``out-of-the-box'' kind will also help establish SNG as a useful tool for
early-stage drug design.
Further exploration is also needed on the symbolic side.
The use of probabilistic symbolic learning for example, 
may be needed for problems for which background knowledge is uncertain or
missing. Nothing in the conceptual framework of SNG systems requires
the use of categoric logical descriptions.
More broadly, SNG systems are of relevance not just to molecule-generation,
but for any problem requiring the generation of data that needs verification
either formally or by a person, but for which we do not already have
\blue{a formal} description (generation of system's behaviour, subject to the
constraints of symbolically-learned digital twins; planners where
a symbolic model is used to constrain and verify plans generated by an LLM;
and symbolic models constraining the generation of experiments for
active learning are some examples).  Finally,   we believe the conceptual
structure of fiber posets are naturally applicable to 
the development of other kinds of hybrid neurosymbolic systems. For example,
when developing neural-predictors with symbolic-explainers, it is
evident that we are dealing with pairs of models. Whether there exist
fiber poset constructions for such hybrid models needs further exploration.

\subsection*{Code and Data Availability}
The data and code are available at: \url{https://github.com/tirtharajdash/LMLFStar}.

\subsection*{Acknowledgements}
During part of this work, AS was a visiting Professorial Fellow at UNSW, 
and a TCS Affiliate Professor. He is a member of
the Anuradha and Prashant Palakurthi Centre for AI Research (APPCAIR) at BITS Pilani.
This research is partly supported by: DBT project BT/PR40236/BTIS/137/51/2022 ``Developing Predictive Models for `druglikeness' of small molecules''; 
and CDRF project  C1/23/184 ``Silicon-to-Lead: AI-Driven Design, Synthesis and Development of New Drugs to Combat Cardiovascular Diseases''.
The authors would like to acknowledge Aaron Rock Menezes
for his implementation of the PyLMLF algorithm reported in
\citep{brahmavar2024generating}.
The authors sincerely thank Professors Suman Kundu and Sumit Biswas for their insightful discussions on DBH.
The authors used Claude (Opus 4.7, Anthropic) for language editing and consistency-checking during preparation of the revised manuscript.

\begin{appendix}
\label{app:proofs}
\section{Additional conceptual details for SNG}
\subsection{Note on extending other hybrid systems}
\label{app:hybridext}

The general principle of a symbolic
base combined with neural-fibres can be applied to characterise
systems in each of the cases (A)--(D) in Fig.~\ref{fig:categ_table}.

\begin{myexample}[Hybrid systems with a symbolic base]
Let us assume the following:
\begin{itemize}
    \item A fixed universal set $\mathcal U$ (instances, embeddings, predictions, programs \textit{etc\/}).
    \item Background knowledge $B$;
    \item A symbolic component that can enumerate elements
    from a set of symbolic hypotheses $\mathcal H$; and
    \item A neural component that -- given any element from $H \in {\cal H}$ -- can
    generate elements from ${\cal U}$ that are consistent with $H$.
\end{itemize}
Additionally, the  set ${\cal H}$ is assumed to a partially ordered
set, with ordering $\ge_{\cal H}$. Associated with each element $H$ in the
ordering is a fibre-poset $F(H)$ that is obtained from the neural component.
Each element of the base set is combined with elements from the fibre-sets to
give a partially ordered set ${\mathcal F}$ of  hybrid systems.
Example hybrid systems in the categorisation in Fig.~\ref{fig:categ}(a)
that can be characterised in this way are:

\vspace*{0.2cm}
\centering
{\scriptsize{
\begin{tabular}{|c|l|l|p{2.6cm}|p{3.5cm}|p{3,5cm}|}
\hline
\textbf{Case} &
\textbf{Symbolic} &
\textbf{Neural} &
\textbf{Base} &
\textbf{Fibre} &
\textbf{$\mathcal F$} \\
\hline
\textbf{A} &
Reasoning &
Reasoning &
Hypotheses ordered by entailment &
Neural reasoners consistent with the symbolic entailments &
Pairs ordered by stronger symbolic entailment and higher neural agreement \\
\hline
\textbf{B} &
Learning &
Reasoning &
Hypotheses ordered by specialization &
Neural approximators of the symbolic extensions &
Pairs ordered by more specific hypotheses and more confident neural outputs \\
\hline
\textbf{C} &
Reasoning &
Learning &
Hypotheses ordered by entailment &
Neural representations trained to preserve symbolic reasoning patterns &
Pairs ordered by stronger entailment and representational inclusion \\
\hline
\textbf{D} &
Learning &
Learning &
Hypotheses ordered by generality &
Neural learners minimizing a loss defined by the symbolic hypothesis &
Pairs ordered by more general hypotheses and lower training loss \\
\hline
\end{tabular}
}}

\end{myexample}

This example shows how the setting
of a symbolic base with neural fibres can be used to
characterise other hybrid systems where the symbolic component
has the primary role. Further variants -- of less  relevance
to this paper -- but of wider applicability refer to: (a) Hybrid
neurosymbolic systems where the neural component has the primary
role \blue{(a base of neural states with symbolic fibres)}
;
and (b) Hybrid systems where both neural and symbolic components are equally
important. \blue{In this case the base consists of pairs and each pair
is associated with a hybrid instance-set.}
\footnote{
This applies when the system is co-trained, or outputs are coupled.
The ordering over ${\mathcal F}$ can then be very flexible,
not just the obvious  product orders, but also
problem-specific orders.
}
\subsection{Correctness of Procedure \Gen}
\label{app:correctnessGen}
\setcounter{myprop}{1}
\begin{myprop}
The set $M_n$ returned by Procedure \ref{alg:nstar}  is an
element of $F(H)$ and $w_n \in [0,1]$. 
\end{myprop}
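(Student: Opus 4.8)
The plan is to establish the two claims separately, since they concern different components of the output tuple. Recall that $F(H) = (2^{N(H)\,\cap\,\operatorname{ext}(H\mid B)}, \subseteq)$, so showing $M_n \in F(H)$ amounts to verifying the set-membership $M_n \subseteq N(H) \cap \operatorname{ext}(H\mid B)$; and showing $w_n \in [0,1]$ is a counting argument.

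First I would handle the membership claim by unwinding the loop invariant. The key observation is that $M_n$ is built incrementally: in Step~\ref{step:update}, $M_i = \{x : (\mathit{true},x) \in D_i\} \cup M_{i-1}$, starting from $M_0 = \emptyset$. So every element of $M_n$ enters through some $D_i$ with label $\mathit{true}$. By the definition of $D_i$ in Step~\ref{step:check}, an element $x$ receives label $\mathit{true}$ precisely when $x \in S_i$ (hence $x \in \mathcal U$, being a sample produced by the LLM) \emph{and} $x \in \operatorname{nvext}(H\mid B)$. I would then invoke Definition~\ref{def:nvext}, which gives $\operatorname{nvext}(H\mid B) \subseteq \operatorname{ext}(H\mid B)$, so every such $x$ lies in $\operatorname{ext}(H\mid B)$. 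The remaining piece is $x \in N(H)$: since $x$ is an instance generated by the neural component $L$ conditioned on $H$, it belongs to $\operatorname{im}(f_N(H)) = N(H)$ by the definition of $N(H)$. Combining these, every element of $M_n$ lies in $N(H)\cap\operatorname{ext}(H\mid B)$, so $M_n$ is a subset of that set and hence a member of its powerset, i.e. $M_n \in F(H)$.

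For the weight claim, I would argue by bounding cardinalities. The final value is $w_n = |M_n| / (s \times n)$ from the penultimate step. The denominator is strictly positive since $s, n \ge 1$. For the upper bound, I would note that on each of the $n$ iterations the call $\mathit{Sample}(s,L,P_i)$ returns at most $s$ instances, so $|S_i| \le s$ and therefore $|\{x : (\mathit{true},x) \in D_i\}| \le |S_i| \le s$. Since $M_n$ is the union over $i = 1,\ldots,n$ of these per-iteration accepted sets, $|M_n| \le \sum_{i=1}^{n} s = s \times n$, giving $w_n \le 1$; and $|M_n| \ge 0$ gives $w_n \ge 0$.

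This is an essentially routine verification, so there is no genuine obstacle; the one point requiring a little care is the cardinality bound $|M_n| \le s\times n$. Because $M_n$ is formed by set-union rather than by summing counts, a collision (the same molecule being accepted in two different iterations) would make the true count smaller, which only helps the inequality; so the bound holds regardless of whether samples across iterations are distinct. I would make this explicit to avoid the subtle misreading that $|M_n|$ counts accepted draws with multiplicity. Everything else follows directly from the definitions already in place.
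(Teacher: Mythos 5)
Your proposal is correct and follows essentially the same route as the paper's proof: the paper phrases it as a loop invariant ($M_{i-1} \subseteq N(H) \cap \operatorname{ext}(H\mid B)$ and $|M_{i-1}| \le s \times (i-1)$), while you split this into a provenance argument for membership plus a per-iteration counting bound, which is the same inductive content in different packaging. Your explicit remarks that $\operatorname{nvext}(H\mid B) \subseteq \operatorname{ext}(H\mid B)$ does the work for the extension claim, and that set-union collisions only strengthen the bound $|M_n| \le s \times n$, are both sound and in fact slightly cleaner than the paper's write-up.
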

\begin{proof}
\Gen executes the loop (Steps \ref{step:startloop}--\ref{step:endloop})  $n$ times.
We claim the following is loop invariant:
\[M_{i-1} \subseteq N(H) \cap \operatorname{ext}(H|B) \mbox{ and } |M_{i-1}| \leq s \times (i-1)\] 
At the start of the iteration ($i = 1$), $M_{i-1} = \emptyset$
and the invariant is trivially true.
Assume the invariant holds at the start of the $k^\text{th}$ iteration. That is,
$M_{k-1} \subseteq N(H) \cap \operatorname{ext}(H|B)$ and 
$|M_{k-1}| \leq s\times(k-1)$.
On the $k^\text{th}$ iteration: 
    (i) $S_k \subseteq N(H)$;
    and (ii) in Step \ref{step:update} $M_k = Z_k \cup M_{k-1}$,
    where $Z_k = \{x:  x \in {\cal U},~ (true,x) \in D_k\}$. Since
    $(true,x) \in D_k ~\text{iff}~ x \in S_k \cap \operatorname{ext}(H|B)$. So $Z_k
    \subseteq N(H)~ \cap~ \operatorname{ext}(H|B)$.
    Therefore $M_k \subseteq N(H)~ \cap~ \operatorname{ext}(H|B)$.
    Also since $|D_k| \leq s$,
    $|M_k| \leq s + |M_{k-1}|$. Since 
    $|M_{k-1}| \leq s \times (k-1)$, it follows that
    $|M_k| \leq s \times k$. 
    The loop variable $i$ is incremented to $k+1$, and at the start of the next
    iteration, clearly $M_{i-1}=M_k$ which is a subset of $\operatorname{ext}(H|B)$ and has at most $s \times k$ elements. The procedure
    clearly terminates since $i$ is bounded by $n$, and
    the procedure returns the set $M_{n}$ which is a subset of
    $\operatorname{ext}(H|B)$ and $w_n \in [0,1]$. 
\end{proof}

\section{Algorithmic Details: Application to Lead Discovery}
\label{app:alg}
\subsection{Setting up the base poset}

We first introduce some definitions that are helpful in
clarifying both  the set of hypotheses ${\cal H}$ and
the ordering over that set.

\begin{mydefinition}[Interval-Vectors]
An $n$-dimensional interval-vector $\mathbf{v}$ $=$
$([a_1,b_1],\ldots,[a_n,b_n])$ is an
element of
$(\mathbb{R} \times \mathbb{R})^n$ where  $a_i \leq b_i$
for $i \in \{1,\ldots,n\}$.
\end{mydefinition}

\noindent
 We will sometimes denote  $(\mathbb{R} \times \mathbb{R})$ as ${\cal I}$
and $(\mathbb{R} \times \mathbb{R})^n$ as ${\cal I}^n$.
The set ${\cal I}^n$ is therefore the set of $n$-dimensional hyper-rectangles,
and an interval-vector is a hyper-rectangle.

\noindent
The following definition is useful later.

\begin{mydefinition}[Interval-Vector Containment]
\label{def:subsume}
 Given interval-vectors $\mathbf{v}_1, \mathbf{v}_2 \in {\cal I}^n$
If
$(\mathbf{v}_2[1] \subseteq \mathbf{v}_1[1]) \wedge \cdots \wedge (\mathbf{v}_2[n] \subseteq \mathbf{v}_1[n])$ then
we will say $\mathbf{v}_2$ is contained by $\mathbf{v}_1$ (resply. $\mathbf{v}_1$ 
contains $\mathbf{v}_2$)
We denote this by
$\mathbf{v}_2 \sqsubseteq \mathbf{v}_1$ (resply.
$\mathbf{v}_1 \sqsupseteq \mathbf{v}_2$). If there exists at least
one $j \in \{1,\ldots,n\}$ s.t. $\mathbf{v}_2[j] \subset \mathbf{v}_1[j]$, we
will say $\mathbf{v}_2$ is properly contained by $\mathbf{v}_1$ (resply.
$\mathbf{v}_1$ properly contains $\mathbf{v}_2$). We denote
this by $\mathbf{v}_2 \sqsubset \mathbf{v}_1$ (resply. $\mathbf{v}_1 \sqsupset \mathbf{v}_2$).
\end{mydefinition}

\noindent
Clearly, if $\mathbf{v}_1 \sqsubset \mathbf{v}_2$ then
$\mathbf{v}_1 \sqsubseteq \mathbf{v}_2$.

\begin{mydefinition}[Factors]
\label{def:fac}
Let ${\cal U}$ be a set of instances. A factor is
a function $f: {\cal U} \to \mathbb{R}$.
\end{mydefinition}

\begin{mydefinition}[Factor Specification]
\label{def:facspec}
Let $F = (f_1,\ldots,f_n)$ be a sequence of factors.
A factor specification is the pair
$(F,\mathbf{\Theta})$ and
$\mathbf{\Theta} = ([f_1^-,f_1^+],\ldots,[f_n^-,f_n^+]) \in {\cal I}^n$.
For $i \in \{1,\ldots,n\}$, $[f_i^-,f_i^+]$ is the range of values for the
factor $f_i$.
\end{mydefinition}

\noindent

A factor-specification allows us to define the notion of
an \textit{experiment}:

\begin{mydefinition}[Experiment]
\label{def:expt}
Let ${\cal U}$ be a set of instances.
Let $((f_1,\ldots,f_n),\mathbf{\Theta})$ be a factor-specification.
An experiment $\mathbf{e}$ given the factor-specification, or simply an experiment,
is an interval-vector in ${\cal I}^n$ s.t.
$\mathbf{\Theta}$ subsumes $\mathbf{e}$.
\end{mydefinition}

Experiments specify conjunctive logical constraints on factors. We adopt
the terminology in Inductive Logic Programming (ILP) and each experiment will
 be associated with a \textit{hypothesis}.

\begin{mydefinition}[Hypothesis]
\label{def:genhyp}
Let ${\cal U}$ be a set of instances,
$(F,\mathbf{\Theta})$ be the factor specification where
$F = (f_1,\ldots,f_n)$. Let 
$\mathbf{e}$ be an experiment given $(F,\mathbf{\Theta})$.
Then a hypothesis given an experiment is the clause:

\begin{center}
{\footnotesize{
\begin{tabbing}
\= $P(x) \leftarrow$ \= \kill
\> $Hypothesis((f_1,\ldots,f_n),(i_1,\ldots,i_n)):$\\
\> $\forall x (\Sigma(x) \leftarrow$ \\
\> \> $x \in {\cal U} \wedge $\\
\> \> $(f_1(x) \in i_1) \wedge \cdots \wedge (f_n(x) \in i_n))$
\end{tabbing}
}}
\end{center} 
\end{mydefinition}

\noindent
The hypothesis space ${\cal H}$ is the set of such hypotheses.
The ordering on this set will be same as introduced in Defn.~\ref{def:pohyp},
namely pointwise inclusion of extensions.

\subsection{Bayesian Scoring of Hypotheses}
\label{app:Qscore}

\begin{mydefinition}[McCreath's $Q$-Heuristic]
\label{def:q}
Let ${\cal U}$ denote the set of all instances.
Let $h$ be a hypothesis as defined in Defn.~\ref{def:genhyp}.
Let $E^+$ denote a set of positive  examples
and $E^-$ denote a set of
negative examples s.t. $(|E^+| \geq 1$ and $|E^-| \geq 0$.
Let $D = E^+ \cup E^-$. 
Let $\text{ext}(h|B) = \{x: x \in {\cal X}, B \wedge h \models Feasible(x)\}$; and for
any $S \subseteq {\cal U}$, 
$\theta(S) = \frac{|S|}{|{\cal X}|}$.
Let $\epsilon$ be
the probability that an instance is randomly assigned to $E^+$ (resply.
$E^-$).
Let $B$ denote background knowledge, $TP(H|B,D) = \{e: e \in E^+, e \in \operatorname{ext}(H|B)\}$;
$TN(H|B,D) = \{e: \neg e \in E^-, B \wedge H \wedge e \not \in \operatorname{ext}(h|B)\}$; and
$FPN(H|B,D) =  D \setminus (TP(H|B,D) \cup TN(H|B,D))$.
 Then, dropping the inclusion of $B,D$ for convenience,
 the fixed-example model in \citep{mccreath1998lime}
 defines the quality of a hypothesis as:
\begin{align*}
Q(H) = & \, \operatorname{log}(P(H)) \\
& + |TP(H)| \, \operatorname{log}\left( \frac{1-\epsilon}{\theta(\operatorname{ext}(H))} + \epsilon \right) \\
& + |TN(H)| \, \mathrm{log}\left( \frac{1-\epsilon}{1-\theta(\operatorname{ext}(h))} + \epsilon \right) \\
& + |FPN(H)| \, \operatorname{log}(\epsilon).
\end{align*}

\noindent
For the special case of $\epsilon = 0$, the quality
of a hypothesis in the fixed-example setting simplifies to:
\[
Q(H) = \operatorname{log}(P(H)) + |E^+| \, \operatorname{log} \frac{1}{\theta(\operatorname{ext}(h))} + |E^-| \, \mathrm{log} \frac{1}{1-\theta(\operatorname{ext}(h))}.
\]
\end{mydefinition}

In \citep{mccreath1998lime} it is shown that maximising
$Q(H|B,D)$ maximises the Bayesian
posterior $\text{P}(H|B,D)$, along other theoretical results including a
proof of (probabilistic) convergence to a target concept. 
Assuming the entailment relation $\models$ can be checked,
the practical difficulties in using the $Q$-heuristic are in obtaining the
values for $\theta(\operatorname{ext}(H))$ and $P(H)$. We note the following:

We will need the following to be able to use the $Q$-heuristic here:
\begin{enumerate}
    \item[a.] In order to obtain the sets $TP,TN$ and $FPN$ 
        we will require $B$ to contain all the definitions needed to evaluate
        the constraint in the hypothesis (that is, $B$ will
        need to contain definitions for the $f_i(\cdot)$).
    \item[b.] By definition, $\text{ext}(h)$ is the set of feasible instances
        as defined
        in $Defn. \ref{def:ext}$.
        We can estimate $\theta(ext(h))$ on a random sample $X \subset {\cal X}$
        as follows: Let $S = \{x: x \in X, \Phi(x) \text{ is } true\}$.
        Then the (maximum-likelihood) estimate of $\theta(\text{ext}(h))$ is
        $\hat{\theta}(ext(h)) = \frac{|S|}{|X|}$.\footnote{A sample-based estimate is also
        used in \citep{mccreath1998lime} and \citep{muggleton1996positive}.}
\end{enumerate}

\begin{myremark}
\label{rem:posonly}
These results follow straightforwardly
from the definition in Defn. \ref{def:q}:
    
\begin{description}
    \item[Positive-only data.] Let $E^+ \neq \emptyset$ and 
        $E^- = \emptyset$. Let $\epsilon = 0$.
        If $P(H_1) = P(H_2)$, then
        $(Q(H_1) > Q(H_2))$ iff $(\operatorname{ext}(h_1) < \operatorname{ext}(h_2))$. 
     \item[Negative-only data.] Let $E^- \neq \emptyset$ and 
        $E^+ = \emptyset$. Let $\epsilon = 0$.
        If $P(H_1) = P(H_2)$, then
        $(Q(H_1) > Q(H_2))$ iff $(\operatorname{ext}(H_1) > \operatorname{ext}(H_2))$.
\end{description}
That is, in the noise-free case, with equal prior probabilities,
and positive data only, more specific
hypotheses will be preferred; and with negative data only, more general
hypotheses will be preferred.
\end{myremark}

\subsection{Property of the procedure \GM} 
\label{app:proofcmimpl}
\begin{myprop}
Let $(F,\cdot)$ be a factor-specification, $B$ denote background knowledge.
    Let $\mathbf{e}_{i}$ ($1 \leq i \leq k$) be an experiment selected by
     \GM on the $i^\text{th}$ iteration s.t. $\mathbf{e}_i$ is contained by $\mathbf{e}_{i-1}$.
Let $H_i = Hypothesis(F,\mathbf{e}_i)$, and $H_{i-1} = Hypothesis(F,\mathbf{e}_{i-1})$.
Then $H_{i-1} \models H_i$.
\end{myprop}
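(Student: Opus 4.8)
The plan is to argue model-theoretically, showing that every interpretation satisfying $H_{i-1}$ also satisfies $H_i$. First I would write out both hypotheses in the clausal form of Definition~\ref{def:genhyp}. Writing $\mathbf{e}_{i-1} = ([a_1,b_1],\ldots,[a_n,b_n])$ and $\mathbf{e}_i = ([c_1,d_1],\ldots,[c_n,d_n])$, and using that both hypotheses are built from the \emph{same} factor sequence $F = (f_1,\ldots,f_n)$, they take the form
\[
H_{i-1}:\ \forall x\,\bigl(\Sigma(x) \leftarrow x\in\mathcal U \wedge \textstyle\bigwedge_{j=1}^{n} f_j(x)\in[a_j,b_j]\bigr),
\]
\[
H_i:\ \forall x\,\bigl(\Sigma(x) \leftarrow x\in\mathcal U \wedge \textstyle\bigwedge_{j=1}^{n} f_j(x)\in[c_j,d_j]\bigr).
\]

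The key observation is that interval containment makes the body of $H_i$ logically stronger than the body of $H_{i-1}$. By hypothesis $\mathbf{e}_i$ is contained by $\mathbf{e}_{i-1}$, so by Definition~\ref{def:subsume} we have $[c_j,d_j]\subseteq[a_j,b_j]$ for every $j\in\{1,\ldots,n\}$. Hence for any $x$, if $f_j(x)\in[c_j,d_j]$ for all $j$ then $f_j(x)\in[a_j,b_j]$ for all $j$; that is, the body of $H_i$ semantically entails the body of $H_{i-1}$. With this in hand I would finish by a routine transitivity-of-implication step: let $M$ be any model of $H_{i-1}$, and take any $x$ for which the body of $H_i$ holds in $M$, i.e.\ $x\in\mathcal U$ and $f_j(x)\in[c_j,d_j]$ for all $j$. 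By the key observation the body of $H_{i-1}$ also holds for $x$ in $M$, and since $M\models H_{i-1}$ this forces $\Sigma(x)$ to hold in $M$. Thus $H_i$ is satisfied in $M$, and as $M$ was arbitrary we conclude $H_{i-1}\models H_i$.

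The only real subtlety --- and the step I would be most careful about --- is the direction of entailment together with the status of the membership atoms. Tightening the antecedent of a universally quantified implication yields a logically \emph{weaker} clause, so it is the hypothesis over the larger rectangle $\mathbf{e}_{i-1}$ that entails the one over the smaller rectangle $\mathbf{e}_i$, not the reverse; this matches the intended reading that a more general hypothesis entails a more specific one, and it is consistent with the use of the result in Remark~\ref{rem:searchorder}. I would also make explicit that the implication $f_j(x)\in[c_j,d_j]\Rightarrow f_j(x)\in[a_j,b_j]$ is not a propositional tautology but relies on the ordering of the reals (the nesting of the intervals), so strictly the argument is carried out in any interpretation respecting the standard meaning of interval membership, equivalently modulo the arithmetic facts supplied in $B$. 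Everything else is immediate, and no genuine obstacle arises.
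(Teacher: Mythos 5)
Your proof is correct and follows essentially the same route as the paper's: both rest on the key observation that interval containment makes the body of $H_i$ entail the body of $H_{i-1}$, from which the clause-level entailment $H_{i-1} \models H_i$ follows. The only difference is presentational --- the paper argues by contradiction (assuming some $a$ with $H_{i-1}(a)$ true and $H_i(a)$ false) while you argue directly over an arbitrary model of $H_{i-1}$; your attention to the direction of entailment and to the fact that interval nesting (not propositional logic alone) supplies the body-to-body implication is exactly the right point of care.
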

\begin{proof}
First we observe that $H_{i}(x):  (\Sigma(x) \leftarrow (x \in {\cal U} \wedge C_i(x))$ and
$H_{i-1}(x):  (\Sigma(x) \leftarrow (x \in {\cal U}) \wedge C_{i-1}(x))$.
It is easy to see that  since by construction  $\mathbf{e_{i-1}}$ contains $\mathbf{e_i}$,
$\forall x (C_{i}(x) \models C_{i-1}(x))$. Now suppose $H_{i-1} \not \models H_i$.
That is, there exists some $a \in {\cal U}$ s.t. $H_{i-1}(a)$ is true
and $H_i(a)$ is false. Since $H_{i}(a)$ is false,
$\Sigma(a)$ is false and $C_{i}(a)$  is true.  Since $C_{i}(x) \models C_{i-1}(x)$ for all
$x$, and $C_i(a)$ is true, therefore $C_{i-1}(a)$ is true. Since $H_{i-1}$ is assumed
true, and $C_{i-1}(a)$ is true, then $\Sigma(a)$ is true, which is a contradiction.
So for all  $a \in {\cal U}$, whenever $H_{i-1}$ is true
then $H_i$ is also true. 
\end{proof}

\section{Additional Experimental Details: Case Studies}
\label{app:expt}

\subsection{Note on biochemical terminology}
\label{app:biochem}
\blue{
A few terms used throughout the paper may be unfamiliar to readers from
the ML community. 
A \emph{small molecule} is a low-molecular-weight
organic compound (typically below 900\,Da), often a drug candidate.
A \emph{target} is a biomolecule (usually a protein) whose activity one
wishes to modulate to produce a therapeutic effect. 
An \emph{inhibitor} is a small molecule that binds the target and reduces its activity,
typically by occupying a functional site such as the active site or an
allosteric pocket. 
\emph{Binding affinity} measures the strength of the
molecule--target interaction (commonly reported as $K_i$, $K_d$, or
IC$_{50}$); 
in this work we estimate it computationally using docking
software (\texttt{GNINA}), 
which predicts a docking score (binding affinity).
A \emph{hit} is a compound showing measurable activity against the
target in an initial assay, while a \emph{lead} is a hit that has been
validated and refined to exhibit suitable potency, selectivity, and
drug-like properties, serving as a starting point for further
medicinal-chemistry optimisation \citep{hughes2011principles,bleicher2003hit}.
A \emph{scaffold} is a common structural core shared across a family of
molecules, and \emph{SMILES} is a standard
string encoding of molecular structure.
Readers interested in a more detailed introduction to small-molecule drug design should see \citep{di2015drug,hughes2011principles,doytchnova:drug}.
Figure~\ref{fig:binding} illustrates the geometric situation that a
docking software attempts to score: a small-molecule inhibitor lodged
inside the binding pocket of its protein target.
}

\begin{figure}[!htb]
\centering
\begin{tikzpicture}[
    every node/.style={font=\normalsize},
    atom/.style={circle, fill=black, inner sep=1.1pt},
    bond/.style={thick},
]
  \draw[thick, fill=gray!25]
    (-3.2,-1.8)
    .. controls (-4.3,-0.6) and (-4.3,0.6) .. (-3.2,1.8)
    .. controls (-2.0,2.5) and ( 0.0,2.5) .. ( 1.2,1.8)
    .. controls ( 1.7,1.3) and ( 1.0,1.0) .. ( 0.5,0.6)
    .. controls ( 0.1,0.2) and ( 0.1,-0.2) .. ( 0.5,-0.6)
    .. controls ( 1.0,-1.0) and ( 1.7,-1.3) .. ( 1.2,-1.8)
    .. controls ( 0.0,-2.5) and (-2.0,-2.5) .. (-3.2,-1.8);

  \node at (-2.0,0.2) {\textbf{Protein target}};
  \node[font=\footnotesize, align=center] at (-2.0,-0.4)
        {(e.g.\ a kinase domain)};

  \begin{scope}[shift={(1.05,0)}, scale=0.35]
    \coordinate (a1) at ( 90:1);
    \coordinate (a2) at ( 30:1);
    \coordinate (a3) at (-30:1);
    \coordinate (a4) at (-90:1);
    \coordinate (a5) at (-150:1);
    \coordinate (a6) at ( 150:1);
    \coordinate (a7) at ($(a2)+(0.8,0.4)$);
    \coordinate (a8) at ($(a7)+(0.9,-0.2)$);
    \draw[bond] (a1)--(a2)--(a3)--(a4)--(a5)--(a6)--cycle;
    \draw[bond] ($(a1)+(-0.05,0.05)$)--($(a3)+(-0.05,-0.05)$);
    \draw[bond] ($(a3)+(-0.05,0.05)$)--($(a5)+(0.05,0.05)$);
    \draw[bond] (a2)--(a7);
    \draw[bond] (a7)--(a8);
    \foreach \p in {a1,a2,a3,a4,a5,a6,a7,a8} \node[atom] at (\p) {};
  \end{scope}

  \draw[dashed, thick] (0.50, 0.55) -- (0.95, 0.30);
  \draw[dashed, thick] (0.50,-0.55) -- (0.95,-0.30);
  \draw[dashed, thick] (0.10, 0.00) -- (0.70, 0.05);

  \draw[->, thick] (3.6, 1.6) -- (1.7, 0.8);
  \node[anchor=west, align=left] at (3.6, 1.6)
        {Binding\\ pocket};

  \draw[->, thick] (3.6, 0.0) -- (1.7, 0.0);
  \node[anchor=west, align=left] at (3.6, 0.0)
        {Small-molecule\\ inhibitor (ligand)};

  \draw[->, thick] (3.6,-1.6) -- (0.95,-0.15);
  \node[anchor=west, align=left] at (3.6,-1.6)
        {Non-covalent\\ interactions};
\end{tikzpicture}
\caption{Schematic of small-molecule binding. A drug-like inhibitor
(ligand) occupies a concave pocket on the surface of a protein target
and is held in place by non-covalent interactions (dashed lines).
A docking software predicts both the bound pose and an associated
binding-affinity score.}
\label{fig:binding}
\end{figure}

\subsection{Method}
\label{app:meth}

\noindent
The following additional details are relevant to the experimental method:

\begin{itemize}
    \item In the controlled (Validate) experiments, we are only attempting to optimise
        one factor, namely: docking score, which is indicative of binding affinity.
        This is in line with what was done in \citep{brahmavar2024generating}.
        For the open-ended (Explore) experiments, we will extend this to include:
        number of synthesis steps, and estimated yield per step.
    \item The factor-set specification also requires identifying minimum and
        maximum for the initial search space. 
        For all experiments, we use: $\{\mathit{affinity}: [3,10], \mathit{molwt}: [200, 700], \mathit{SAS}: [0, 7.0]\}$,
        where $\mathit{affinity}$ is the predicted affinity from GNINA software,
        $\mathit{molwt}$ is the molecular weight, and 
        $\mathit{SAS}$ denotes synthesis accessibility score.
    \item For the controlled experiments, we use the known inhibitors and non-inhibitors as the dataset $D$. For the open-ended experiments we use 5 known inhibitors of DBH
    and 5 randomly sampled molecules from ChEMBL as non-inhibitors. 
    Estimating the $Q$-heuristic requires a sample of
        unlabelled molecules. For this we use a randomly drawn set of 1000
        molecules from the ChEMBL database.
    \item The description of \SH does not specify a sampling method for
            obtaining subsumed hyper-rectangles. We use an approach based
            on Latin Squares Hyper-Rectangle Sampling (LHRS: \citep{mckay2000comparison}).
            Some additional prior information may be available that may be used
            to modify the basic LHRS approach: (a) If we know beforehand that
            a factor is to maximised (for example, binding affinity), then we 
            do not sample points from the upper-end of the range for the factor.
            Thus, subsuming rectangles are obtaining by only random placements of
            the lower-end; (b) Similarly, if we know beforehand that we want to minimise
            a factor, then only the upper-threshold is sampled. If nothing is known
            then a standard LHRS approach is adopted.
    \item For all experiments, we use a value of $s=10$; and $n=10$ for \GM and we sample
    100 molecules when it returns the optimal hypothesis. The set of feasible molecules
    during search and in the final generation are considered for evaluation.
    \item For molecule generation from LLM, we use model's chat-completion API with a maximum output length of $128\times \mathtt{max\_samples}$ tokens, a temperature of 0.7 to balance diversity and determinism. The prompt content was provided via
    the $\mathtt{messages}$ parameter (see Appendix~\ref{app:expt} for details).
    \item We assess the results of controlled experiments by examining the range and
        median docking scores of the molecules generated, and compare those to
        those generated by the LMLF procedure in \citep{brahmavar2024generating}. For the open-ended
        experiments, we obtain the statistics on docking scores, and compare them
        to those of the latest generation of molecules used for DBH inhibition
        (see Sec. \ref{sec:case2}). In addition, we also provide an assessment of the
        molecules by an expert synthetic chemist.
    \item For each target problem, we assess the novelty of the generated molecules
        by using the average Tanimoto (or Jaccard) coefficient to the database of  
        known inhibitors. 
\end{itemize}

\subsection{Results}
\label{app:results}

\noindent
The following details refer to the questions in Sec.~\ref{sec:resultsextra}.

\vspace*{0.2cm}
\textbf{The choice of LLM.}
Table~\ref{fig:llms} compares the performance of \GM when
implemented with two different general-purpose LLMs: OpenAI's GPT-4o
and Anthropic's Claude 3.5 Sonnet \citep{anthropic2024claude35sonnet}.
Both these \GM variants are evaluated under identical parameter 
settings and prompts to ensure a fair comparison.
Results are shown for both the zero early context ($|C_0|=0$) and early context ($|C_0|=5$)
settings across the three benchmark protein 
targets. Overall, the two models yield comparable mean binding affinities, 
with neither showing a clear advantage from the inclusion of early context.
This similarity in performance between
the two general-purpose LLMs raises the question whether a 
domain-specific LLM might offer a potential advantage.
We examine this next.

    We implement a \GM variant using 87M parameter variant of GPT-2 that has been trained
    on a large corpus of about 480 million molecules (as SMILES strings) from the ZINC database \citep{entropy_gpt2_zinc_87m},
    which provides extensive coverage of chemical space and encodes 
    general chemical grammar and structure-property relationships.
    We refer this model here as Molecule-GPT2.
    This model is expected to generate syntactically 
    valid and chemically diverse molecules 
    due to its broad exposure to molecular representations.
    However, because the training data does not contain 
    information specific to the protein targets considered
    here, the model may be lacking the ability to exploit
    target-specific binding patterns that are otherwise
    known to the general-purpose LLMs through their 
    enormous training corpus of research articles and large-scale studies. Nevertheless, Molecule-GPT2 produces
    competitive predicted affinities for JAK2 and DRD2, which are probably very highly studied proteins and for these substantial
    domain-knowledge is available from the literature.
    Interestingly, it outperforms general-purpose LLMs for DBH, a relatively lesser studied target and additional target-specific
    knowledge has to be inferred from known inhibitor molecules.
    
    Although these results indicate that a general-purpose LLM is 
    more suited for target-specific lead generation in the manner
    done in this study, a more thorough investigation is warranted.
    Furthermore, while prompting was used directly to generate
    molecules from the general-purpose LLMs, the same approach
    was not straightforward for the domain-specific LLM (Molecule-GPT2),
    primarily because it was not trained in the same manner as GPT-4o or Claude 3.5 Sonnet.
    For each protein target, we first had to construct a small
    set of structural scaffolds by examining known inhibitor
    molecules, and then use these scaffolds as prefixes for 
    GPT2-based molecule generation (see Appendix~\ref{app:expt} for 
    more details on this).
    This additional preprocessing step may have contributed
    to the lower performance observed for Molecule-GPT2 
    relative to the general-purpose LLMs.
    
    \begin{table}[!htb]
    \centering
    {\small
    \begin{tabular}{|c|c|c|c|}
    \hline
    Problem               & $|C_0|$ & GPT-4o    & Claude 3.5 Sonnet  \\ \hline
    \multirow{2}{*}{JAK2} & 0       & 7.72 (1.53) & 8.41 (0.25) \\ \cline{2-4} 
                          & 5       & 8.59 (0.23) & 8.43 (0.22) \\ \hline
    \multirow{2}{*}{DRD2} & 0       & 7.40 (0.61) & 7.44 (0.48) \\ \cline{2-4} 
                          & 5       & 7.53 (0.47) & 7.43 (0.42) \\ \hline
    \multirow{2}{*}{DBH}  & 0       & 4.72 (0.30) & 4.67 (0.63) \\ \cline{2-4} 
                          & 5       & 4.23 (0.41) & 4.53 (0.58) \\ \hline
    \end{tabular}}
    \caption{Comparison of mean (standard deviation) predicted binding affinities
    for molecules generated by SNG using two LLMs: OpenAI's GPT-4o 
    and Anthropic's Claude 3.5 Sonnet.
    Both models are evaluated without early context ($|C_0|=0$) and 
    with early context ($|C_0|=5$) settings across three protein targets (JAK2, DRD2, DBH).
    For both LLMs, the temperature parameter was fixed at $0.7$.
    }
    \label{fig:llms}
    \end{table}

    \begin{table}[!htb]
    \centering
    {\small
    \begin{tabular}{|c|c|c|c|}
    \hline
    Problem & GPT-4o      & Claude 3.5 Sonnet & Molecule-GPT2 \\ \hline
    JAK2    & 7.72 (1.53) & 8.41 (0.25)       & 6.50 (0.60) \\ \hline
    DRD2    & 7.40 (0.61) & 7.44 (0.48)       & 6.99 (0.61) \\ \hline
    DBH     & 4.72 (0.30) & 4.67 (0.63)       & 5.74 (0.41) \\ \hline 
    \end{tabular}}
    \caption{Comparison of mean (standard deviation) predicted binding affinities
    for molecules generated by  SNG variants (with zero early context) 
    and an 87M parameter GPT2 model trained
    with 480M molecules from the ZINC database.}
    \label{fig:llmvars}
    \end{table}

\vspace*{0.2cm}
\noindent
\textbf{The effect of parameters.}
    We consider 3 parameters: (a) Initial context provided to the LLM;
    (b) sample-sizes used by the symbolic learner; and (c) LLM
    temperature.
    Table~\ref{appfig:context} shows the changes in mean binding affinity
    with the initial context (``few-shots'') provided to the LLM.
    The results suggest that there is little difference in
    predicted affinity, although there is a large change in the Tanimoto
    coefficient for the DBH data. 

\begin{table}[!htb]
\centering
{\small
\begin{tabular}{|c|c|c|c|}
\hline
Problem               & $|C_0|$ & Affinity    & $TC$        \\ \hline
\multirow{2}{*}{JAK2} & 0       & 7.72 (1.53) & 0.14 (0.032) \\ \cline{2-4} 
                      & 5       & 8.59 (0.23) & 0.15 (0.014) \\ \hline
\multirow{2}{*}{DRD2} & 0       & 7.40 (0.61) & 0.13 (0.034) \\ \cline{2-4} 
                      & 5       & 7.53 (0.47) & 0.13 (0.033) \\ \hline
\multirow{2}{*}{DBH}  & 0       & 4.72 (0.30) & 0.24 (0.056) \\ \cline{2-4} 
                      & 5       & 4.23 (0.41) & 0.08 (0.022) \\ \hline
\end{tabular}}
\caption{Statistics of binding affinities and novelty of LLM-generated molecules using
{\tt GPT-4o}. 
The entries represent the mean values, with standard deviations shown in parentheses.
$C_0$ denotes a set of known inhibitor of the target protein, provided as an early context 
to the LLM during prompting (see later).
Predicted affinity refers to the predicted binding affinity from {\tt GNINA} software.}
\label{appfig:context}
\end{table}

 Figures \ref{fig:param_s}--\ref{fig:param_temp} shows the effect of
 varying (in turn): sample-size in \GM;  and LLM ``temperature''. 
\GM with GPT-4o shows statistically significant
difference in predicted binding affinities, with
two different sample sizes ($s=10, 20$) for the
JAK2 and DBH proteins. Although having a high sample size
($s=20$) may result in better and more consistent
binding affinities, this was not observed for DRD2 protein.  
Figure~\ref{fig:param_temp} shows the effect of sampling
temperature on predicted binding affinities for molecules
generated by \GM using GPT-4o and Claude 3.5 Sonnet. 
For GPT-4o, temperature changes produce statistically significant
differences in mean affinity scores, whereas 
Molecule-GPT2 exhibits minimal variation across temperatures and statistically 
significant difference in number of molecules generated by \GM and
the mean affinity. We performed a similar investigation using
Claude 3.5 Sonnet and observe that it
remains largely stable across temperatures, with no
statistically significant variation in mean affinity.

    \begin{figure}[!htb]
        \centering
        \includegraphics[width=\textwidth]{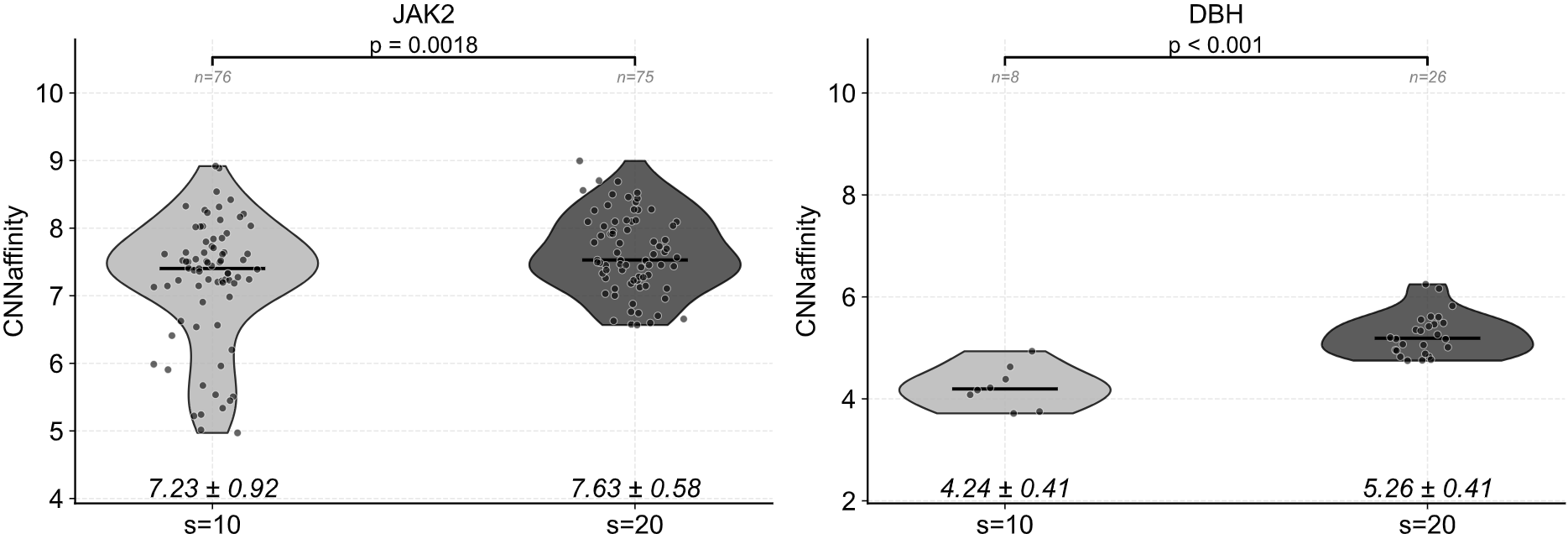}
        \caption{Effect of the parameter $s$ on \GM's performance in
        generating molecules for JAK2 and DBH proteins.
        The LLM used here is GPT-4o.}
        \label{fig:param_s}
    \end{figure}
    
    \begin{figure}[!htb]
        \centering
        \includegraphics[width=\textwidth]{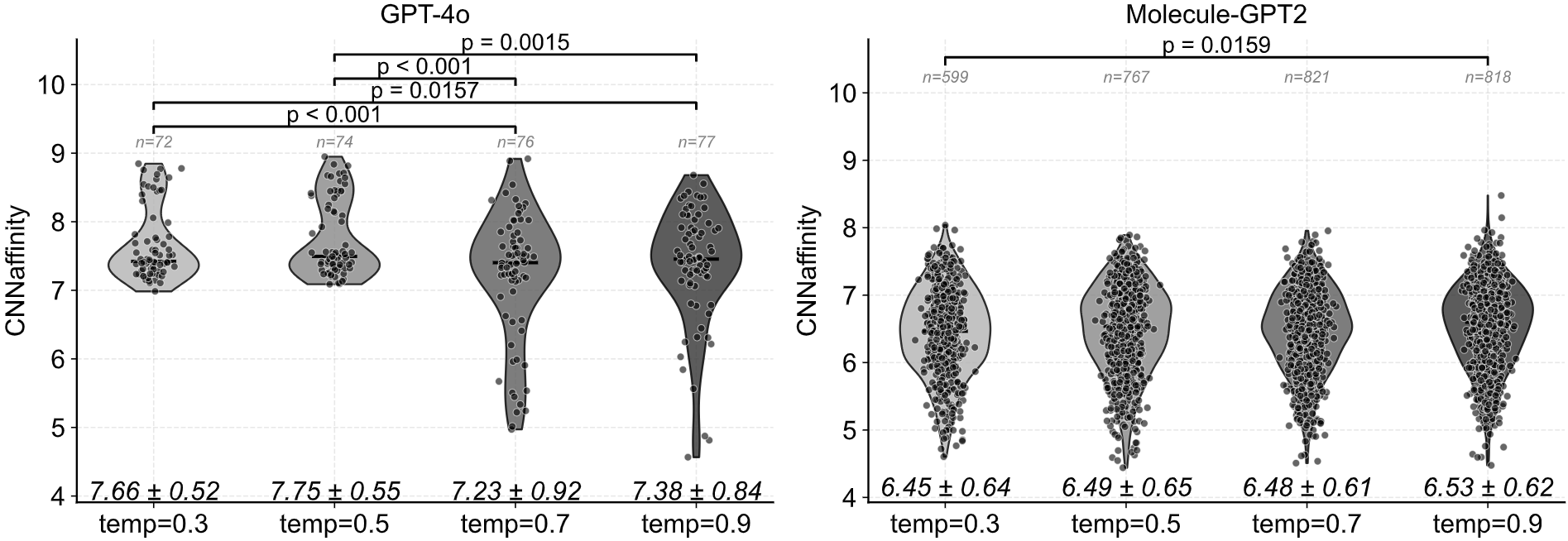}
        \caption{Effect of the temperature parameter while generating
        molecules using \GM with a general-purpose LLM,  GPT-4o (left) and a domain-specific LLM, Molecule-GPT2 (right). 
        $p$-value is computed
        using Welch's t-test comparing the mean CNNaffinity between two temperature settings.}
        \label{fig:param_temp}
    \end{figure}

\subsection{Prompts}
\label{app:prompts}

We distinguish between 2 types of prompts for the API calls to the LLM (in this paper, \texttt{GPT}): 
\begin{itemize}
\item System prompt: We use this prompt to guide the model's behaviour and
responses. It sets the overall instructions for the model, such as defining its role and
the syntactic format in which it should respond. In this work, we use this as:
``You are a scientist specialising in chemistry and drug design.
Your task is to generate valid SMILES strings as a comma-separated list inside square brackets. 
Return the response as plain text without any formatting, backticks, or explanations. 
The response must be formatted exactly as follows: [SMILES1, SMILES2, \dots]. 
Avoid any extra text or explanations.''
    
\item User prompt: This is the input provided by the user, containing the actual query
constructed in manner described below. The LLM generates responses based on this input 
while considering the instructions set by the system prompt. There are two kinds
of user prompts based on whether a set of inhibitors are shown to the LLM during search and generation or not. 
For revealing known inhibitors, we use: ``Generate up to $s$ novel valid molecules similar to the following positive molecules: [\dots]''. 
Otherwise, the prompt is simply ``Generate up to $s$ novel valid molecules''.
We also allow feasible molecules generated in \GM to be used as ``context''.
In this case, we use it as a part of the user prompt as: ``Additionally, consider these previously generated feasible molecules: [\dots].'' 
\end{itemize}

\subsection{Domain-specific LLM: Molecule-GPT2}

We implemented a variant of \GM using the publicly available 
GPT2 model hosted on Hugging Face \citep{entropy_gpt2_zinc_87m}. 
This model, which we call Molecule-GPT2, is based on the GPT2 architecture and has approximately 87 million parameters and trained with
about 480 million molecules (in SMILES representation)
from the ZINC database \citep{irwin2020zinc20}.
Unlike general-purpose LLMs such as GPT-4o and Claude 3.5 Sonnet, 
Molecule-GPT2 is not instruction-tuned and does not natively
support natural language prompting for molecule generation.
Therefore, we employed a scaffold-based prefixing strategy 
to condition the model's outputs on specific protein targets. 
For each target protein, we first identified 
a small set of representative molecular scaffolds by 
extracting common substructures from known inhibitors. 
In all our experiments,
we restrict to top 5 scaffolds, with each scaffold of maximum length 8. 
These scaffolds, encoded in SMILES format, were then used as 
fixed prefixes during autoregressive generation with the  GPT2 model. 
We performed molecule generation using HuggingFace's
$\mathtt{model.generate()}$ API with a maximum output 
length of 120 tokens, nucleus sampling with $p=0.9$,
temperature $=0.8$, and stochastic sampling enabled.
For each scaffold, we generated multiple candidate molecules
for downstream binding affinity evaluation for \GM.
Although this approach may not be the optimal one,
it does ensure that the generated molecules were chemically valid 
and retained structural motifs relevant to the target protein.

\subsection{Reproducibility and API cost}
A general concern with LLM-based systems is the reproducibility of results
obtained via closed, API-only commercial models. We therefore note the
approximate API costs incurred in this work. The entire study,
conducted over several months, incurred modest API costs: approximately
USD~10 each on OpenAI \texttt{GPT-4o} and Anthropic \texttt{Claude~3.5
Sonnet}, across roughly 400 requests and $\sim$100{,}000 generated tokens
in total, for each model. 
The domain-specific \texttt{Molecule-GPT2} model
\citep{entropy_gpt2_zinc_87m} was run locally and incurred no API cost.

We note that nothing in the SNG framework is specific to any particular
LLM: the \Gen procedure (Procedure~\ref{alg:nstar}) is agnostic to the
choice of generator, requiring only that it can be conditioned on a
hypothesis description. To strengthen reproducibility, we have included
results with a free, domain-specific \texttt{Molecule-GPT2} model
(Appendix~\ref{app:results}); a fully open general-purpose
instruction-tuned LLM (e.g.\ a \texttt{Llama-3} or \texttt{Mistral}
variant) could be dropped into the same interface. We identify
this as a concrete extension. All code and data are
available at the repository referenced in Sec.~\ref{sec:concl}.
\end{appendix}

\bibliography{refs}

\end{document}